\newtheoremstyle{thmstyle}
  {5pt} 
  {\topsep} 
  {} 
  {} 
  {\bfseries} 
  {.} 
  {.5em} 
  {} 
\theoremstyle{thmstyle}
\newtheorem{theorem}{Theorem}[section]
\newtheorem{definition}[theorem]{Definition}
\newtheorem{proposition}[theorem]{Proposition}
\newcommand{\beq}{\begin{equation}}
\newcommand{\eeq}{\end{equation}}
\newcommand{\beqa}{\begin{eqnarray}}
\newcommand{\eeqa}{\end{eqnarray}}
\newcommand{\beqan}{\begin{eqnarray*}}
\newcommand{\eeqan}{\end{eqnarray*}}
\newcommand{\statespace}{\mathcal{X}}
\newcommand{\actionspace}{\mathcal{A}}
\newcommand{\scflong}{sufficient conditioning functional\xspace}
\newcommand{\scfslong}{sufficient conditioning functionals\xspace}
\newcommand{\SCF}{SCF\xspace}
\newcommand{\SCFs}{SCFs\xspace}
\newcommand{\scsalong}{sufficient conditioning sigma-algebra\xspace}
\newcommand{\SCSA}{SCSA\xspace}
\newcommand{\CIW}{CIS\xspace}
\newcommand{\ciw}{conditional importance sampling\xspace}
\newcommand{\sciw}{state-conditioned importance sampling\xspace}
\newcommand{\SCIW}{SCIS\xspace}
\newcommand{\trajectory}[1]{\eta^{#1}}
\newcommand{\partialtrajectory}[2]{\eta^{#1}_{#2}}
\newcommand{\ISreturnE}[3]{\bar{G}^{\text{OIS};#1,#2}_{#3}}
\newcommand{\PDISreturnE}[3]{\bar{G}^{\text{PDIS};#1,#2}_{#3}}
\newcommand{\defeq}{\vcentcolon=}
\begin{document}

%

%
\runningauthor{Rowland, Harutyunyan, van Hasselt, Borsa, Schaul, Munos, Dabney}

\twocolumn[

\aistatstitle{Conditional Importance Sampling for Off-Policy Learning}

\aistatsauthor{Mark Rowland \ \ \ \ \ \ \ \ \  Anna Harutyunyan \ \ \ \ \ \ \ \ \ Hado van Hasselt \\ \textbf{Diana Borsa \ \ \ \ \ \ \ \ \ Tom Schaul \ \ \ \ \ \ \ \ \  R{\'e}mi Munos \ \ \ \ \ \ \ \ \  Will Dabney}}

\aistatsaddress{DeepMind} ]

\begin{abstract}
  The principal contribution of this paper is a conceptual framework for off-policy reinforcement learning, based on conditional expectations of importance sampling ratios. This framework yields new perspectives and understanding of existing off-policy algorithms, and reveals a broad space of unexplored algorithms. We theoretically analyse this space, and concretely investigate several algorithms that arise from this framework.
\end{abstract}

\section{Introduction}

Using off-policy data is crucial for many tasks in reinforcement learning (RL), including for acquiring knowledge about diverse aspects of the environment \citep{sutton2011horde}, learning from memorised data \citep{mnih2015humanlevel,schaul2016prior}, 
exploration \citep{watkins1992q}, 
and learning to perform auxiliary tasks \citep{schaul2015universal,jaderberg2017reinforcement,bellemare2019geometric}.
One of the fundamental techniques for correcting for the difference between the policy that generated the data and the policy that an algorithm aims to learn about is importance sampling (IS) \citep{MET49,Kahn1949}, which was first introduced in off-policy RL by \citet{precup2000}.
Importance sampling features as a core ingredient of many off-policy algorithms~\citep{maei2011gradient,van2014off,munos2016safe,jiang2016doubly,sutton2016emphatic}, and is supported by strong theoretical understanding coming from the computational statistics literature \citep{robert2013monte,sarkka2013bayesian}.

Importance sampling often suffers from high variance, especially when multi-step trajectories are considered. This has motivated the study of a wide range of variance reduction techniques in off-policy reinforcement learning. These techniques include importance weight truncation \citep{munos2016safe,pmlr-v80-espeholt18a} weighted importance sampling \citep{precup2000,Mahmood:2014}, adaptive bootstrapping \citep{mahmood2017multi}, variants of emphatic TD \citep{hallak2016generalized}, saddle-point formulations exploiting low-variance versions of SGD \citep{du2017stochastic,johnson2013accelerating,defazio2014saga}, empirical proposal estimation \citep{hanna2019importance}, doubly-robust approaches \citep{jiang2016doubly,thomas2016data}, confidence bounds on returns \citep{thomas2015high,thomas2015highimprovement,metelli2018policy,papini2019optimistic} and state distribution estimation \citep{MIS,liu2018breaking,kallus2019double,kallus2019efficiently,uehara2019minimax,hallak2017consistent,gelada2019off,nachum2019dualdice}.

In this paper, we propose a new framework for variance reduction in off-policy learning, \emph{conditional importance sampling} (\CIW), based on taking conditional expectations of importance weights. This framework is motivated by the observation that when estimating a return off-policy using standard importance sampling, every action along a trajectory contributes to the importance weight, even if the action had no effect on the return observed. Intuitively, it would be preferable for the importance weight to depend only on the return itself; if two policies generate similar distributions of returns, there should be no need to perform importance weighting at all. As just one application of the \CIW framework, we make this insight precise, and introduce \emph{return-conditioned importance sampling} (RCIS), a new off-policy evaluation algorithm. Concretely, using notation introduced formally in Section~\ref{sec:is}, given a random return $G$, RCIS uses \emph{conditional} importance weights of the form
\begin{align*}
    \mathbb{E}\left\lbrack \prod_{t=1}^{n-1} \frac{\pi(A_t|X_t)}{\mu(A_t|X_t)}\ \middle|\ G \right\rbrack \, ,
\end{align*}
which integrates out noise in the trajectory that is irrelevant in determining the return, leading to a lower-variance importance weight.

However, return is just one possible variable to condition on. The central insight of the \CIW framework is that there exists a large space of variables that the importance weights can be conditioned on, with each choice leading to a different off-policy algorithm. In the remainder of the paper, we give a mathematical description of the general \CIW framework, which then allows us to make several further contributions:
\begin{enumerate}[leftmargin=0.6cm,label={(\roman*)},topsep=-4pt,itemsep=0pt,partopsep=0pt,parsep=0pt]
    \item We compare and analyse the statistical properties of \CIW algorithms based on properties of the conditioning variables.
    \item We study several specific instantiations of algorithms from this framework, including RCIS and \emph{\sciw} (\SCIW, given by conditioning on the states visited by a trajectory at each timestep).
    \item We develop practical versions of these algorithms, based on learning the conditional importance weights in a supervised manner.
\end{enumerate}

We note that concurrently with this work, \cite{liu2019understanding} also consider conditional importance sampling in off-policy learning, establishing connections with the conditional Monte Carlo literature and undertaking statistical analysis of these estimators.

\section{Background}\label{sec:is}

Consider a Markov decision process (MDP) $(\statespace,\actionspace,\gamma,P,\mathcal{R})$ with finite state space $\statespace$, finite action space $\actionspace$, discount factor $\gamma \in [0,1)$, transition kernel $P:\statespace\times\actionspace \rightarrow \mathscr{P}(\statespace)$, reward distribution probability mass function $\mathcal{R}: \mathbb{R}\times\statespace\times\actionspace\rightarrow\mathbb{R}$ (so that $\mathcal{R}(r, x, a)$ encodes the probability of observing reward $r$ after taking action $a$ in state $x$), and initial state distribution $\nu \in \mathscr{P}(\statespace)$\footnote{With some care, it is possible to show through the use of measure theory that versions of many results in this paper hold in much greater generality, such as in classes of MDPs with continuous state and/or action spaces. For the sake of accessibility and clarity of exposition, the main paper focuses on the discrete case, but we discuss how these results generalise in Appendix~\ref{sec:generalising} for the interested reader.}.

Given a Markov policy $\pi:\statespace\rightarrow\mathscr{P}(\actionspace)$, the distribution of the process $(X_t, A_t, R_t)_{t \geq 0}$ itself is defined by $X_0 \sim \nu$, $A_t | X_{0:t},A_{0:t-1},R_{0:t-1} \sim \pi(\cdot|X_t)$, $R_t | X_{0:t},A_{0:t},R_{0:t-1} \sim \mathcal{R}(\cdot|X_t,A_t)$, and $X_{t+1}|X_{0:t},A_{0:t},R_{0:t} \sim P(\cdot|X_{t},A_{t})$ for each $t \geq 0$.
We denote the full trajectory $(X_t,A_t,R_t)_{t \geq 0}$ by $\tau$, and use the notation $\tau_{s:t}$ to denote the partial trajectory $(X_s,A_s,R_s,X_{s+1},\ldots,X_t)$. 
We denote the distribution of $\tau$ under the policy $\pi$ by $\trajectory{\pi}$, 
and denote the distribution of $\tau_{s:t}$ by $\partialtrajectory{\pi}{s:t}$ for any $0\leq s \leq t$. We will also denote conditional versions of these distributions given $(X_0,A_0)=(x,a)$ in the manner $\trajectory{\pi}|_{(x,a)}$.

\subsection{Policy evaluation}\label{sec:pe}

The \emph{evaluation problem} with \emph{target policy} $\pi : \statespace \rightarrow \mathscr{P}(\actionspace)$
is defined as estimation of the Q-function
\begin{align}\label{eq:Q}
    Q^\pi(x, a) \defeq \mathbb{E}_{\trajectory{\pi}|_{(x, a)}}\!\left\lbrack \sum_{t=0}^\infty \gamma^t R_t \right\rbrack \, ,
\end{align}
for all $(x, a) \in \statespace \times \actionspace$.
The fundamental result of value-based RL is that the Q-function in Expression~\eqref{eq:Q} satisfies the \emph{Bellman equation} $T^\pi Q^\pi = Q^\pi$ \citep{bellman1957}, where the one-step Bellman evaluation operator $T^\pi : \mathbb{R}^{\statespace\times\actionspace} \rightarrow \mathbb{R}^{\statespace\times\actionspace}$ is defined by
\begin{align*}
    (T^\pi Q)(x, a)\! = \! \mathbb{E}_{\trajectory{\pi}|_{(x, a)}}\!\left\lbrack R_0 \!+\! \gamma  Q(X_1,A_1) \right\rbrack ,
\end{align*}
for all $Q \in \mathbb{R}^{\statespace\times\actionspace}$ and $(x,a) \in \statespace\times\actionspace$. As $T^\pi$ is a contraction in $(\mathbb{R}^{\statespace\times\actionspace}, \|\cdot\|_{\infty})$, $Q^\pi$ is its \emph{unique} fixed point, and repeated application of $T^\pi$ to any initial Q-function will converge to $Q^\pi$. An evaluation algorithm may therefore seek to (approximately) perform a recursion of the form $Q_{k+1} \leftarrow T^\pi Q_k$ ($k \geq 1$), with the aim of converging to $Q^\pi$. More general classes of contractive operators with fixed point $Q^\pi$ can also be considered, such as the Retrace operator \citep{munos2016safe}, and the $n$-step Bellman operator, given by
\begin{align*}
    ((T^\pi)^n Q)(x, a) \! =\! \mathbb{E}_{\trajectory{\pi}|_{(x, a)}}\!\left\lbrack \sum_{t=0}^{n-1} \gamma^t R_t \!+\! \gamma^n  Q(X_n,A_n) \right\rbrack .
\end{align*}

\subsection{Off-policy policy evaluation}

Exact computation of the expectations defining the above operators is often intractable, and so Monte Carlo\footnote{Throughout, we use the term ``Monte Carlo'' in its statistical sense, to mean sampled-based approximation of any expectation, including those defining temporal difference algorithms.} estimators based on trajectories sampled from the environment are used \citep{bertsekas1996neuro,szepesvari2010algorithms,Sutton2018}. Further, it is often desirable, or necessary, to use trajectories sampled from a different distribution $\trajectory{\mu}$, based on a \emph{behaviour policy} $\mu : \statespace \rightarrow \mathscr{P}(\mathcal{A})$; in such cases, the problem is said to be \emph{off-policy}.

A common estimator for the application of the $n$-step Bellman operator $(T^\pi)^n$ to a Q-function $Q$ at a specific state-action pair $(x, a) \in \statespace\times\actionspace$ is given by sampling $\tau_{0:n}$ from $\partialtrajectory{\mu}{0:n}|_{(x, a)}$, and computing a \emph{bootstrapped return}, defined by
\begin{align}\label{eq:Ebootstrapreturn}
    \bar{G}^\pi_{0:n} \defeq \sum_{t=0}^{n-1} \gamma^{t} R_t + \gamma^{n} V(X_n; \pi) \, ,
\end{align}
where $V(x; \pi) = \mathbb{E}_{A \sim \pi(\cdot|x)}[Q(x, A)]$, 
and an \emph{importance}-\emph{weighting correction term}, defined by
\begin{align}\label{eq:traj-iw}
    \rho^{\pi,\mu}_{s:t} \defeq \prod_{i=s}^t \frac{\pi(A_i|X_i)}{\mu(A_i|X_i)} \, .
\end{align}
for $1\leq s \leq t$, 
and finally forming the \emph{ordinary importance sampling} (OIS) estimator
\begin{align}\label{eq:ois}
     \ISreturnE{\pi}{\mu}{0:n} \defeq \rho_{1:n-1}^{\pi,\mu} \bar{G}^\pi_{0:n} \, .
\end{align}

Much research in off-policy learning is concerned with constructing such estimators that have desirable statistical properties, such as low variance and consistency. Throughout, we will assume the \emph{support condition}:
\begin{align}\label{eq:sc}\tag{SC}
    \text{supp}(\pi(\cdot|x)) \subseteq \text{supp}(\mu(\cdot|x)) \ \text{ for all } x \in \statespace \, ,
\end{align}
a mild assumption that is sufficient for unbiased importance sampling, which is satisfied by exploratory behaviours such as $\varepsilon$-greedy. 
This is equivalent to absolute continuity of $\pi$ with respect to $\mu$ at each state; intuitively, this ensures that any trajectory that can arise by following $\pi$ is also realisable under $\mu$.

\section{Preliminary analysis}\label{sec:isinrl}

As a warm-up and motivation for the conceptual framework we present in the next section, we analyse some commonly-used off-policy Monte Carlo estimators.

\subsection{Ordinary importance sampling}

We begin with a formal proof of the unbiasedness of the OIS estimator, a well-known result in the literature. In this and many results that follow, we will be interested in distributions over trajectories conditioned on some initial state-action pair $(x, a) \in \statespace \times \actionspace$; this will be present in the notation, but we avoid continuously mentioning it in the text for brevity. We examine the proof of this result in some detail, since it will be informative for the original results that follow. Proofs of other results in the paper are given in Appendix~\ref{sec:proofs}.

\begin{proposition}\label{prop:nstepunbiased}
    Assume the support condition \eqref{eq:sc} holds. 
    For a trajectory drawn from $\trajectory{\mu}|_{(x, a)}$, the OIS estimator in Expression \eqref{eq:ois} is unbiased for the output of the $n$-step return operator $(T^\pi)^n$. 
    That is,
    \begin{align}\label{eq:isestimators}
        \mathbb{E}_{\trajectory{\mu}|_{(x, a)}}\!\left\lbrack \rho^{\pi,\mu}_{1:n-1} \bar{G}^\pi_{0:n} \right\rbrack 
        = 
        \mathbb{E}_{\trajectory{\pi}|_{(x, a)}}\!\left\lbrack \bar{G}^\pi_{0:n} \right\rbrack \, .
    \end{align}
\end{proposition}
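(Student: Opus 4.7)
My plan is to expand the left-hand expectation by summing over the probability mass function of the partial trajectory $\tau_{0:n}$ under $\partialtrajectory{\mu}{0:n}|_{(x,a)}$, then identify the resulting summand as the integrand of the right-hand side against the mass function of $\partialtrajectory{\pi}{0:n}|_{(x,a)}$. Since $\bar{G}^\pi_{0:n}$ is a deterministic function of $\tau_{0:n}$, and $\rho^{\pi,\mu}_{1:n-1}$ likewise depends only on $\tau_{0:n}$, the tower property lets me marginalise out the tail of the trajectory on both sides and reduce the identity to a finite sum over partial trajectories of length $n$.

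The key algebraic step is to write the mass function of $\tau_{0:n}$ under $\partialtrajectory{\mu}{0:n}|_{(x,a)}$ as a product of initial-reward, transition, action, and subsequent-reward factors, and then observe that the only $\mu$-dependent factors are the action factors $\mu(A_t\mid X_t)$ for $t = 1, \ldots, n-1$. Multiplying by $\rho^{\pi,\mu}_{1:n-1}$ cancels each of these in turn and replaces them with $\pi(A_t\mid X_t)$, producing exactly the mass function of $\tau_{0:n}$ under $\partialtrajectory{\pi}{0:n}|_{(x,a)}$. The index range $1, \ldots, n-1$ is precisely right for two boundary reasons: (i) $A_0$ is not sampled but fixed to $a$ by the conditioning, so there is no $\mu(A_0\mid X_0)$ factor to correct; and (ii) the bootstrap term $V(X_n; \pi) = \E_{A \sim \pi(\cdot\mid X_n)}[Q(X_n, A)]$ already averages $Q$ under $\pi$ and does not involve a sampled $A_n$, so no correction is needed at step $n$.

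The role of the support condition \eqref{eq:sc} is to make the ratio $\pi(A_t\mid X_t)/\mu(A_t\mid X_t)$ well-defined under the convention $0/0 = 0$: whenever $\mu(A_t\mid X_t) = 0$, the support condition forces $\pi(A_t\mid X_t) = 0$ also, so that such trajectories contribute nothing on either side of the identity. I do not anticipate a substantive obstacle: the argument is a direct change-of-measure manipulation that is essentially algebraic once the trajectory factorisation has been written out. The only mildly delicate point, which this plan handles explicitly, is to keep track of the asymmetric treatment of the fixed action at step $0$ and the bootstrap at step $n$, so that the product of importance ratios is taken over precisely $t = 1, \ldots, n-1$ and no spurious factor is introduced at the endpoints.
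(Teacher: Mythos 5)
Your proposal is correct and follows essentially the same route as the paper: factorise the trajectory mass function, cancel the shared transition and reward factors so that multiplying by $\rho^{\pi,\mu}_{1:n-1}$ converts the $\mu$-trajectory probability into the $\pi$-trajectory probability, and then conclude by the resulting change of measure (which the paper phrases as an appeal to standard importance sampling with $\Psi(\tau_{0:n}) = \bar{G}^\pi_{0:n}$). Your explicit treatment of the endpoints ($A_0$ fixed, bootstrap at step $n$ already under $\pi$) and of the role of the support condition matches the paper's argument, just spelled out in slightly more detail.
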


\begin{proof}
We first observe that the ratio of policy probabilities that appears within the factor $\rho^{\pi,\mu}_{1:n-1}$ can also be interpreted as the importance ratio for the conditional trajectory distributions $\partialtrajectory{\mu}{0:n}|_{(x,a)}$ and $\partialtrajectory{\pi}{0:n}|_{(x,a)}$, as the following calculation shows:
\begin{align}
    &\frac{\partialtrajectory{\pi}{0:n}|_{(x,a)}(\tau_{0:n})}{\partialtrajectory{\mu}{0:n}|_{(x,a)}(\tau_{0:n})} \label{eq:trajprobs}\\
     = &
    \frac{P(X_1|x_0,a_0) \mathcal{R}(R_0|x_0,a_0)}{P(X_1|x_0,a_0)\mathcal{R}(R_0|x_0,a_0)} \times \nonumber \\
    & \ \ \frac{\prod_{t=1}^{n-1} \pi(A_t|X_t) \mathcal{R}(R_t|X_t,A_t) P(X_{t+1}|X_t, A_t)}{\prod_{t=1}^{n-1} \mu(A_t|X_t) \mathcal{R}(R_t|X_t,A_t) P(X_{t+1}|X_t, A_t)} \nonumber \\
     = & 
    \prod_{t=1}^{n-1} \frac{\pi(A_t|X_t)}{\mu(A_t|X_t)} \label{eq:actionprobs} \\
     = & \rho^{\pi,\mu}_{1:n-1} \nonumber
    \, .
\end{align}
 
Noting also that the term $\bar{G}^\pi_{0:n}$ in Equation~\eqref{eq:isestimators} is simply a function of the random truncated trajectory $\tau_{0:n}$, we may now appeal to standard importance sampling theory, using the notation $\Psi(\tau_{0:n}) = \bar{G}^\pi_{0:n}$, to obtain
\begin{align*}
    \mathbb{E}_{\trajectory{\mu}|_{(x, a)}}\!\!\left\lbrack \rho^{\pi,\mu}_{1:n-1} \bar{G}^\pi_{0:n} \right\rbrack
    \!= & \mathbb{E}_{\trajectory{\mu}|_{(x, a)}}\!\!\left\lbrack \frac{\partialtrajectory{\pi}{0:n}|_{(x, a)}(\tau_{0:n})}{\partialtrajectory{\mu}{0:n}|_{(x, a)}(\tau_{0:n})} \Psi(\tau_{0:n}) \right\rbrack \\
    = & \mathbb{E}_{\trajectory{\pi}|_{(x, a)}}\!\left\lbrack \Psi(\tau_{0:n}) \right\rbrack \\
    = & \mathbb{E}_{\trajectory{\pi}|_{(x, a)}}\!\left\lbrack \bar{G}^\pi_{0:n} \right\rbrack
    \, ,
\end{align*}
as required.
\end{proof}

We highlight two points. Firstly, note that the argument above did not depend on any special structure of $\bar{G}^\pi_{0:n}$, other than that it was expressible as a function of the truncated trajectory $\tau_{0:n}$; this analysis is therefore readily applicable to many other functions of the trajectory beyond $n$-step returns, as we will see in Section~\ref{sec:framework}. Secondly, note that within the proof we showed that the familiar product of ratios of \emph{action} probabilities \eqref{eq:actionprobs} is precisely equal to the ratio of \emph{trajectory} probabilities \eqref{eq:trajprobs}, a fact we will use in the remainder of the paper.

\subsection{Per-decision importance sampling}\label{sec:pdis}

Whilst the OIS target of Expression~\eqref{eq:ois} is straightforwardly understood, it often has very high variance. A popular variant that aims to address this shortcoming is given by the \emph{per-decision importance sampling} (PDIS) \citep{precup2000} target:
\begin{align}\label{eq:pdis}
     \PDISreturnE{\pi}{\mu}{0:n}
     = \sum_{t=0}^{n-1} \rho^{\pi,\mu}_{1:t} \gamma^t R_t + \rho^{\pi,\mu}_{1:n-1} \gamma^n V(X_n; \pi) \, ,
\end{align}
The intuition behind this estimator is that each individual reward is only weighted by importance ratios for actions that preceded the reward, it being unnecessary to account for the off-policyness of future actions. This estimator is also unbiased, and is described in the literature as often having lower variance than the OIS estimator.
We show below that each constituent term of the PDIS estimator \emph{is} lower variance than the counterpart term in the OIS estimator.

\begin{restatable}{proposition}{PropPDISVar}\label{prop:PDISvar}
    Assuming the support condition \eqref{eq:sc}, each term in the PDIS estimator has variance at most that of the corresponding term in the OIS estimator. That is, for all $0 \leq t \leq n-1$,
    \begin{align*}
        \mathrm{Var}_{\trajectory{\mu}|_{(x, a)}}\!\left(  \rho^{\pi,\mu}_{1:t} \gamma^t R_t   \right) \leq \mathrm{Var}_{\trajectory{\mu}|_{(x, a)}}\!\left(  \rho^{\pi,\mu}_{1:n-1} \gamma^t R_t   \right) \, .
    \end{align*}
\end{restatable}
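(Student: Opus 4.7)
The plan is to invoke the law of total variance with respect to the natural filtration of the process up to time $t$, exploiting a martingale-type property of the importance weight product under the behaviour policy. Concretely, let $\mathcal{F}_t = \sigma(X_0, A_0, R_0, X_1, A_1, R_1, \ldots, X_t, A_t, R_t)$, so that the ``smaller'' random variable $\rho^{\pi,\mu}_{1:t} \gamma^t R_t$ is $\mathcal{F}_t$-measurable, while the ``larger'' random variable $\rho^{\pi,\mu}_{1:n-1} \gamma^t R_t$ factors as $\rho^{\pi,\mu}_{1:t} \gamma^t R_t \cdot \rho^{\pi,\mu}_{t+1:n-1}$. The law of total variance then gives
\begin{align*}
    \mathrm{Var}_{\trajectory{\mu}|_{(x,a)}}\!\big(\rho^{\pi,\mu}_{1:n-1}\gamma^t R_t\big)
    & = \mathbb{E}\!\left[\mathrm{Var}\!\big(\rho^{\pi,\mu}_{1:n-1}\gamma^t R_t \,\big|\, \mathcal{F}_t\big)\right] \\
    & \quad + \mathrm{Var}\!\left(\mathbb{E}\!\left[\rho^{\pi,\mu}_{1:n-1}\gamma^t R_t \,\big|\, \mathcal{F}_t\right]\right),
\end{align*}
so it suffices to identify the inner conditional expectation with $\rho^{\pi,\mu}_{1:t}\gamma^t R_t$ and discard the nonnegative first term.

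The key step is therefore to verify that $\mathbb{E}_\mu\!\left[\rho^{\pi,\mu}_{t+1:n-1} \,\big|\, \mathcal{F}_t\right] = 1$ almost surely. I would prove this by backward induction on the number of factors, collapsing them one at a time. For the innermost factor, conditioning additionally on everything up through $X_{n-1}$ and using the Markov property, the support condition \eqref{eq:sc} gives
\begin{align*}
    \mathbb{E}_\mu\!\left[\tfrac{\pi(A_{n-1}|X_{n-1})}{\mu(A_{n-1}|X_{n-1})} \,\Big|\, X_{n-1}\right] = \sum_{a \in \actionspace} \pi(a|X_{n-1}) = 1,
\end{align*}
where the sum is restricted to $\mathrm{supp}(\mu(\cdot|X_{n-1}))$ without loss of generality by \eqref{eq:sc}. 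Iterating this argument inwards (each inner expectation integrating out $A_i$ given $X_i$, with intermediate $X_{i+1}, R_i$ steps contributing factors of $1$ since the integrand does not depend on them before the induction hypothesis is applied), one obtains $\mathbb{E}_\mu\!\left[\rho^{\pi,\mu}_{t+1:n-1} \,\big|\, \mathcal{F}_t\right] = 1$.

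Having established this, since $\rho^{\pi,\mu}_{1:t}\gamma^t R_t$ is $\mathcal{F}_t$-measurable we may pull it out of the conditional expectation, yielding $\mathbb{E}_\mu[\rho^{\pi,\mu}_{1:n-1}\gamma^t R_t \mid \mathcal{F}_t] = \rho^{\pi,\mu}_{1:t}\gamma^t R_t$. Substituting back into the law of total variance and dropping the nonnegative term $\mathbb{E}[\mathrm{Var}(\cdot\mid\mathcal{F}_t)]$ gives exactly the claimed inequality.

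The only real technical obstacle is the martingale identity $\mathbb{E}_\mu[\rho^{\pi,\mu}_{t+1:n-1}\mid\mathcal{F}_t]=1$, which requires careful bookkeeping of the order in which randomness is integrated out along the trajectory, and a clean invocation of \eqref{eq:sc} at each step. Everything else is essentially mechanical once that identity is in hand; notably, the argument mirrors the structure of the unbiasedness proof of Proposition~\ref{prop:nstepunbiased}, replacing the function $\Psi(\tau_{0:n})$ there with the degenerate function $1$ applied to the tail trajectory.
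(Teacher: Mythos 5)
Your proposal is correct and follows essentially the same route as the paper: both identify the PDIS term as a conditional expectation of the OIS term (the paper conditions on $(X_{0:t}, A_{0:t}, R_t)$, you on the slightly larger $\sigma$-algebra $\mathcal{F}_t$, which changes nothing) and then drop the nonnegative term in the law of total variance, with the key identity $\mathbb{E}_\mu[\rho^{\pi,\mu}_{t+1:n-1} \mid \mathcal{F}_t] = 1$ being exactly the paper's observation that an importance weight has unit expectation under the sampling distribution when \eqref{eq:sc} holds. Your backward-induction verification of that identity is a fine elaboration of a step the paper simply cites as a general fact.
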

The proof technique provides the main insight giving rise to the \emph{\ciw} framework described in the next section, so we provide a sketch below.
The fundamental idea is to show that each term in the estimator $\PDISreturnE{\pi}{\mu}{0:n}$ can be viewed as a \emph{conditional expectation} of a corresponding term in the estimator $\ISreturnE{\pi}{\mu}{0:n}$; we can then use the following well known variance decomposition for any two real-valued random variables $Z_1$ and $Z_2$ with finite second moments:
\begin{align}
    \mathrm{Var}(Z_1) = & \mathrm{Var}(\mathbb{E}\left\lbrack Z_1 | Z_2 \right\rbrack) + \mathbb{E}\left\lbrack \mathrm{Var}(Z_1|Z_2) \right\rbrack \nonumber \\
    \geq & \mathrm{Var}(\mathbb{E}\left\lbrack Z_1 | Z_2 \right\rbrack) \label{eq:condvar} \, ,
\end{align}
with the inequality strict whenever $Z_1$ is not $\sigma(Z_2)$-measurable, or not a function of $Z_2$, using non-measure-theoretic terminology. This idea is closely related to the notion of Rao-Blackwellisation, a variance reduction technique which is ubiquitous across statistics and signal processing \citep{casella2002statistical,sarkka2013bayesian,robert2013monte}.

To apply this result to prove Proposition~\ref{prop:PDISvar}, consider the term $ \rho^{\pi,\mu}_{1:n-1} \gamma^t R_t$ from the OIS estimator, and the term $\rho^{\pi,\mu}_{1:t} \gamma^t R_t$ from the PDIS estimator. A direct calculation yields 
\begin{align*}
    &\mathbb{E}_{\trajectory{\mu}|_{(x, a)}}\!\left\lbrack \rho^{\pi,\mu}_{1:n-1} \gamma^t R_t \middle| X_{0:t}, A_{0:t}, R_{t} \right\rbrack \\
    = & \rho^{\pi,\mu}_{1:t} \gamma^t R_t\ \mathbb{E}_{\trajectory{\mu}}\!\left\lbrack \rho^{\pi,\mu}_{t+1:n-1} \middle| X_{0:t}, A_{0:t}, R_{t} \right\rbrack \\
    = & \rho^{\pi,\mu}_{1:t} \gamma^t R_t \, .
\end{align*}
The final equality follows from the general fact that when the support condition \eqref{eq:sc} is satisfied, the expectation of an importance weight with respect to the importance sampling distribution is $1$. 
Thus, the PDIS term really is a conditional expectation of the corresponding term in the OIS estimator. The bootstrap terms in the PDIS and OIS estimators are in fact equal, and hence the result of Proposition~\ref{prop:PDISvar} follows. Note that \cite{liu2019understanding} also analyse the covariance terms, showing that it is possible for high covariances to outweigh the benefits of smaller per-term variance.

We are now ready to generalise the reasoning presented in this section, and present the main conceptual framework of the paper.

\section{Conditional importance sampling: Theory}\label{sec:framework}

The proof of Proposition~\ref{prop:PDISvar} highlights an important observation; the PDIS estimator in Expression~\eqref{eq:pdis} can be interpreted as taking particular conditional expectations of the OIS estimator in Expression~\eqref{eq:ois} as a means of reducing variance. It will turn out that this process of taking conditional expectations is a productive way of both discovering new off-policy importance sampling methods, and also understanding their statistical properties. For this reason, we take some time to spell out this logic more generally.

Consider the problem of estimating $\mathbb{E}_{\partialtrajectory{\pi}{0:n}|_{(x, a)}}\!\left\lbrack \Psi(\tau_{0:n}) \right\rbrack$, for some function $\Psi$ of the truncated trajectory $\tau_{0:n}$, via importance sampling. A standard importance estimator, taking $\tau_{0:n}\sim \partialtrajectory{\mu}{0:n}|_{(x, a)}$, is given by
\begin{align}\label{eq:general-ois}
    \frac{\partialtrajectory{\pi}{0:n}|_{(x, a)}(\tau_{0:n})}{\partialtrajectory{\mu}{0:n}|_{(x, a)}(\tau_{0:n})} \Psi(\tau_{0:n}) \, .
\end{align}
If $\Psi$ extracts an $n$-step return from the trajectory, this yields the standard OIS estimator, and if $\Psi$ extracts a single reward $R_t$, this yields an individual term from the OIS estimator. In Section~\ref{sec:isinrl}, we saw that in this latter case, a way of reducing the variance of the resulting estimator is to take the conditional expectation given the random variables $(X_{0:t}, A_{0:t}, R_t)$, essentially because $\Psi(\tau_{0:n})=R_t$ is expressible as a function of $(X_{0:t}, A_{0:t}, R_t)$, and the trajectory importance weight is \emph{not} expressible as a function of $(X_{0:t}, A_{0:t}, R_t)$, allowing some extraneous sources of noise to be integrated out. We now formalise this in greater generality.

\begin{definition}
    Given a functional $\Psi$ of a trajectory $\tau_{0:n}$, we say that $\Psi$ \emph{factors through} another functional $\Phi$ if there exists a third function $h$ (independent of the MDP) with $\Psi = h \circ \Phi$, or equivalently, if $\Psi(\tau_{0:n})$ can be written as a function of $\Phi(\tau_{0:n})$ for all values of $\tau_{0:n}$. We say that $\Phi$ is a \emph{\scflong} (SCF) for $\Psi$.
\end{definition}

This notion of \scfslong suggests the following general framework for constructing off-policy estimators, generalising the perspective of PDIS given in the previous section.

\vspace{2mm}
\mdfsetup{%
backgroundcolor=black!10,
roundcorner=10pt}
\begin{mdframed}
\textbf{Conditional importance sampling.}\\
Given a target functional $\Psi(\tau_{0:n})$, select an \SCF $\Phi$ for $\Psi$ and construct the estimator
\begin{align}\label{eq:conditional-is}
    \mathbb{E}_{\trajectory{\mu}|_{(x, a)}}\!\!\left\lbrack \frac{\partialtrajectory{\pi}{0:n}|_{(x, a)}(\tau_{0:n})}{\partialtrajectory{\mu}{0:n}|_{(x, a)}(\tau_{0:n})} \middle| \Phi(\tau_{0:n}) \right\rbrack\! \Psi(\tau_{0:n}) \, .
\end{align}
\end{mdframed}

Through different choices of $\Psi$ and $\Phi$, this yields a wide space of possible off-policy learning algorithms; we refer to this as the \emph{\ciw} (\CIW) framework. We begin with some basic analysis of the properties of these estimators.

\begin{restatable}{proposition}{PropConditionalISVar}\label{prop:ConditionalISVar}
    Assume the support condition \eqref{eq:sc} holds. 
    Given a trajectory functional $\Psi$ and an associated \SCF $\Phi$, the estimator in Expression~\eqref{eq:conditional-is} is unbiased for $\mathbb{E}_{\trajectory{\pi}}[\Psi(\tau_{0:n})]$.
    Further, its variance is no greater than that of the OIS estimator in Expression~\eqref{eq:general-ois}.
\end{restatable}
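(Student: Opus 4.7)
The plan is to rewrite the CIS estimator of Expression~\eqref{eq:conditional-is} as a single conditional expectation,
\[
\mathbb{E}_{\trajectory{\mu}|_{(x,a)}}\!\left[ \rho^{\pi,\mu}_{1:n-1} \Psi(\tau_{0:n}) \,\middle|\, \Phi(\tau_{0:n}) \right],
\]
from which both claims follow by standard identities, essentially as a direct generalisation of the Rao--Blackwellisation argument for PDIS in Section~\ref{sec:pdis}.

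The enabling observation is that, because $\Psi$ factors through $\Phi$, the random variable $\Psi(\tau_{0:n})$ is $\sigma(\Phi(\tau_{0:n}))$-measurable, so by the ``taking out what is known'' rule for conditional expectation
\[
\mathbb{E}_{\trajectory{\mu}|_{(x,a)}}\!\left[ \rho^{\pi,\mu}_{1:n-1} \,\middle|\, \Phi(\tau_{0:n}) \right] \Psi(\tau_{0:n}) = \mathbb{E}_{\trajectory{\mu}|_{(x,a)}}\!\left[ \rho^{\pi,\mu}_{1:n-1} \Psi(\tau_{0:n}) \,\middle|\, \Phi(\tau_{0:n}) \right].
\]
I am also implicitly using the identification of $\partialtrajectory{\pi}{0:n}|_{(x,a)}(\tau_{0:n}) / \partialtrajectory{\mu}{0:n}|_{(x,a)}(\tau_{0:n})$ with the action-ratio product $\rho^{\pi,\mu}_{1:n-1}$, exactly as in the chain from Equation~\eqref{eq:trajprobs} to Equation~\eqref{eq:actionprobs}.

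Unbiasedness then follows in one line by the tower property together with Proposition~\ref{prop:nstepunbiased}: taking the outer expectation under $\trajectory{\mu}|_{(x,a)}$ collapses the conditioning to give $\mathbb{E}_{\trajectory{\mu}|_{(x,a)}}[\rho^{\pi,\mu}_{1:n-1} \Psi(\tau_{0:n})]$, which equals $\mathbb{E}_{\trajectory{\pi}|_{(x,a)}}[\Psi(\tau_{0:n})]$ by the same change-of-measure step used in the proof of Proposition~\ref{prop:nstepunbiased} (that proof only used that $\bar{G}^\pi_{0:n}$ is a function of the truncated trajectory, which is equally true of the general $\Psi$ here). For the variance claim, I would apply the decomposition~\eqref{eq:condvar} with $Z_1 = \rho^{\pi,\mu}_{1:n-1} \Psi(\tau_{0:n})$ and $Z_2 = \Phi(\tau_{0:n})$: the conclusion is that the variance of $\mathbb{E}[Z_1 \mid Z_2]$ is at most that of $Z_1$, and by the rewriting above these are precisely the variances of the CIS and OIS estimators respectively.

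There is no substantive obstacle to the argument. The only subtlety is justifying that $\Psi = h \circ \Phi$ really does give $\sigma(\Phi)$-measurability of $\Psi$, which underpins the single measurability-based step that converts the CIS estimator into a conditional expectation of the OIS integrand. Once that step is made, unbiasedness is pure tower property and the variance bound is pure Rao--Blackwellisation.
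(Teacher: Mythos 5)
Your proposal is correct and follows essentially the same route as the paper's proof: pull $\Psi(\tau_{0:n})$ inside the conditional expectation using that $\Phi$ is an \SCF for $\Psi$, then obtain unbiasedness via the tower law and the change-of-measure identity from Proposition~\ref{prop:nstepunbiased}, and the variance bound via the decomposition~\eqref{eq:condvar} with $Z_1 = \rho^{\pi,\mu}_{1:n-1}\Psi(\tau_{0:n})$ and $Z_2 = \Phi(\tau_{0:n})$, exactly as in the paper.
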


Having established our framework and some basic properties of the associated estimators, we now provide several examples to aid intuition.

\textbf{Examples:}
\begin{itemize}[leftmargin=0.5cm,topsep=-4pt,itemsep=0pt,partopsep=0pt,parsep=0pt]
\item By taking $\Psi(\tau_{0:n}) = \bar{G}^\pi_{0:n} $, $\Phi(\tau_{0:n}) = \tau_{0:n}$ we recover the usual OIS estimator.
\item By taking $\Psi(\tau_{0:n}) = R_t$, and $\Phi(\tau_{0:n}) = (X_{0:t}, A_{0:t}, R_t)$, we recover the terms of the PDIS estimator, as described in Section~\ref{sec:pdis}.
\item By taking $\Psi(\tau_{0:n}) = R_t$, and $\Phi(\tau_{0:n}) = (X_t, A_t, R_t)$, we recover terms closely related to the marginalised importance sampling estimator of \citet{MIS}.
\end{itemize}

\subsection{Orderings and optimality}\label{sec:orderoptimal}

Given the wide space of possible \SCFs $\Phi$ for a given target $\Psi$ encompassed by the \CIW estimators in Expression~\eqref{eq:conditional-is}, we now turn our attention to understanding the statistical properties of these estimators.\footnote{It is possible to get a slightly more streamlined analysis by working with sigma-algebras, rather than functions of the random trajectory. We restrict the exposition in the main paper to the functional perspective for accessibility and simplicity, but provide a measure-theoretic perspective in Appendix~\ref{sec:sigmaalgebras}.}

There is a natural preorder $\precsim$ on \SCFs for a given target $\Psi$,
that specifies that for two such conditioners $\Phi_1$ and $\Phi_2$, we have $\Phi_1 \precsim \Phi_2$ if there exists a function $h$ such that $\Phi_1 = h \circ \Phi_2$. The relation $\Phi_1 \precsim \Phi_2$  thus makes rigorous the notion \emph{``all information encoded about the trajectory $\tau_{0:n}$ by $\Phi_1(\tau_{0:n})$ is also encoded by $\Phi_2(\tau_{0:n})$''}.

A second preorder that is particularly relevant to studying the statistical properties of off-policy estimators is that of having lower variance, denoted $\precsim_{\mathbb{V}}$. That is, $\Phi_1 \precsim_\mathbb{V} \Phi_2$ if $\text{Var}(\mathbb{E}\left\lbrack \rho^{\pi,\mu}_{1:n-1} \middle| \Phi_1(\tau_{0:n}) \right\rbrack) \leq \text{Var}(\mathbb{E}\left\lbrack \rho^{\pi,\mu}_{1:n-1} \middle| \Phi_2(\tau_{0:n}) \right\rbrack)$. Note that whilst the preorder $\precsim$ is invariant to the MDP and policies $\pi$ and $\mu$ in question, the variance preorder $\precsim_{\mathbb{V}}$ is not. This potentially complicates our variance analysis; however, the following proposition establishes a useful relationship between these two preorders.

\begin{restatable}{proposition}{PropRefine}\label{prop:refine}
    For any given MDP, and pair of policies $\pi$ and $\mu$ satisfying \eqref{eq:sc}, and target functional $\Psi$, the variance preorder \emph{refines} the inclusion preorder. That is, for any two \SCFs $\Phi_1$, $\Phi_2$ of $\Psi$, if $\Phi_1 \precsim \Phi_2$, then we have $\Phi_1 \precsim_\mathbb{V} \Phi_2$.
\end{restatable}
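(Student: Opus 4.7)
The plan is to leverage the tower property of conditional expectation together with the conditional variance decomposition already used in the proof of Proposition~\ref{prop:PDISvar}. The hypothesis $\Phi_1 \precsim \Phi_2$ provides a function $h$ with $\Phi_1 = h \circ \Phi_2$, which means that knowing $\Phi_2(\tau_{0:n})$ determines $\Phi_1(\tau_{0:n})$. Equivalently, the sigma-algebra generated by $\Phi_1(\tau_{0:n})$ is contained in that generated by $\Phi_2(\tau_{0:n})$, so any random variable that is a function of $\Phi_1(\tau_{0:n})$ is also a function of $\Phi_2(\tau_{0:n})$.

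First, I would apply the tower property to write
\begin{align*}
    \mathbb{E}\!\left\lbrack \rho^{\pi,\mu}_{1:n-1} \,\middle|\, \Phi_1(\tau_{0:n}) \right\rbrack
    = \mathbb{E}\!\left\lbrack \mathbb{E}\!\left\lbrack \rho^{\pi,\mu}_{1:n-1} \,\middle|\, \Phi_2(\tau_{0:n}) \right\rbrack \,\middle|\, \Phi_1(\tau_{0:n}) \right\rbrack \, ,
\end{align*}
using the fact that conditioning on $\Phi_1$ is coarser than conditioning on $\Phi_2$. Denote $W = \mathbb{E}[\rho^{\pi,\mu}_{1:n-1} \mid \Phi_2(\tau_{0:n})]$ for brevity; the identity above then reads $\mathbb{E}[\rho^{\pi,\mu}_{1:n-1} \mid \Phi_1(\tau_{0:n})] = \mathbb{E}[W \mid \Phi_1(\tau_{0:n})]$.

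Next, I would apply the conditional variance decomposition from \eqref{eq:condvar} to the random variable $W$, conditioning on $\Phi_1(\tau_{0:n})$:
\begin{align*}
    \mathrm{Var}(W) = \mathrm{Var}(\mathbb{E}\!\left\lbrack W \mid \Phi_1(\tau_{0:n}) \right\rbrack) + \mathbb{E}\!\left\lbrack \mathrm{Var}(W \mid \Phi_1(\tau_{0:n})) \right\rbrack \, .
\end{align*}
Since the second summand is nonnegative, combined with the tower identity above this yields $\mathrm{Var}(W) \geq \mathrm{Var}(\mathbb{E}[\rho^{\pi,\mu}_{1:n-1} \mid \Phi_1(\tau_{0:n})])$, which is exactly $\Phi_1 \precsim_\mathbb{V} \Phi_2$.

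The only real subtlety is justifying the tower step when the conditioning objects are functions of the trajectory rather than sigma-algebras; this is essentially bookkeeping, and the paper already directs the reader to Appendix~\ref{sec:sigmaalgebras} for the measure-theoretic version. I would note there that one should verify that $W$ has finite variance so the decomposition is meaningful, which follows from the support condition \eqref{eq:sc} implying the importance weight has a finite second moment (at least under the finite-MDP assumption adopted in the main text). No other obstacles arise: the argument is a clean instance of Rao–Blackwellisation applied to the conditional importance weight.
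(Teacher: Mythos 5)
Your proof is correct and follows essentially the same route as the paper's: the tower property gives $\mathbb{E}[\rho^{\pi,\mu}_{1:n-1} \mid \Phi_1(\tau_{0:n})] = \mathbb{E}[\mathbb{E}[\rho^{\pi,\mu}_{1:n-1} \mid \Phi_2(\tau_{0:n})] \mid \Phi_1(\tau_{0:n})]$, and the conditional variance formula \eqref{eq:condvar} then yields the variance inequality. Your additional remarks on the measurability bookkeeping and the finite second moment are sensible but not a departure from the paper's argument.
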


The connection established in Proposition~\ref{prop:refine} will allow us to address the question of optimality: which \SCFs for $\Psi$ yield the lowest variance estimator given in Expression~\eqref{eq:conditional-is}?

\begin{restatable}{proposition}{PropOptimalConditioner}\label{prop:optimalconditioner}
    An \SCF for $\Psi$ for which the associated estimator in Expression~\eqref{eq:conditional-is} achieves minimal variance is $\Psi$ itself.
\end{restatable}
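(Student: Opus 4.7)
The plan is to reduce the optimality claim to an application of the variance-reduction logic already used in Proposition~\ref{prop:refine}, after first rewriting the \CIW estimator in a form that combines the importance weight and $\Psi$ into a single conditionally-integrated random variable.

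First I would observe that $\Psi$ is trivially an \SCF for itself, via the identity $\Psi = \mathrm{id}\circ\Psi$, so the estimator \eqref{eq:conditional-is} is well-defined with $\Phi = \Psi$. The next, and critical, step is to note that whenever $\Phi$ is an \SCF for $\Psi$, the random variable $\Psi(\tau_{0:n})$ is by definition a function of $\Phi(\tau_{0:n})$; hence one can pull $\Psi(\tau_{0:n})$ inside the conditional expectation and rewrite the \CIW estimator \eqref{eq:conditional-is} as
\begin{align*}
    \mathbb{E}_{\trajectory{\mu}|_{(x,a)}}\!\left\lbrack \rho^{\pi,\mu}_{1:n-1} \Psi(\tau_{0:n}) \,\middle|\, \Phi(\tau_{0:n}) \right\rbrack,
\end{align*}
where I have already used \eqref{eq:trajprobs}--\eqref{eq:actionprobs} to replace the trajectory density ratio by $\rho^{\pi,\mu}_{1:n-1}$. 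Consequently the variance of the \CIW estimator under $\Phi$ is exactly the variance of the conditional expectation of $\rho^{\pi,\mu}_{1:n-1} \Psi(\tau_{0:n})$ given $\Phi(\tau_{0:n})$.

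Now for any \SCF $\Phi$ of $\Psi$, by definition $\Psi = h\circ\Phi$, which says precisely that $\Psi \precsim \Phi$ in the preorder of Section~\ref{sec:orderoptimal}. I would then invoke an analogue of Proposition~\ref{prop:refine} applied to the random variable $\rho^{\pi,\mu}_{1:n-1}\Psi(\tau_{0:n})$ in place of $\rho^{\pi,\mu}_{1:n-1}$: the proof of that proposition only uses the tower property together with the conditional variance decomposition \eqref{eq:condvar}, both of which apply verbatim to any square-integrable random variable. This yields
\begin{align*}
    \mathrm{Var}\!\left(\mathbb{E}\!\left\lbrack \rho^{\pi,\mu}_{1:n-1} \Psi(\tau_{0:n}) \,\middle|\, \Psi(\tau_{0:n}) \right\rbrack\right) \leq \mathrm{Var}\!\left(\mathbb{E}\!\left\lbrack \rho^{\pi,\mu}_{1:n-1} \Psi(\tau_{0:n}) \,\middle|\, \Phi(\tau_{0:n}) \right\rbrack\right),
\end{align*}
which, by the rewriting above, is the desired variance comparison between the \CIW estimators with conditioners $\Psi$ and $\Phi$.

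The only subtle point is justifying that Proposition~\ref{prop:refine} can be applied to $\rho^{\pi,\mu}_{1:n-1}\Psi(\tau_{0:n})$ rather than $\rho^{\pi,\mu}_{1:n-1}$ alone; assuming a mild integrability condition on $\Psi$ (so that both conditional expectations have finite second moments), this is immediate from the general form of the argument, so I do not anticipate a genuine obstacle here. The conceptual content is that $\Psi$ is the coarsest \SCF of itself, and coarser \SCFs give smaller estimator variance under $\precsim_\mathbb{V}$.
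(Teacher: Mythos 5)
Your proposal is correct, and it follows the same underlying route as the paper: note that $\Psi$ is itself an \SCF for $\Psi$ and is minimal among all \SCFs under $\precsim$ (since $\Psi = h\circ\Phi$ for any \SCF $\Phi$), then invoke the tower property together with the conditional variance decomposition \eqref{eq:condvar}, which is exactly the content of Proposition~\ref{prop:refine}. The one genuine difference is which random variable you Rao--Blackwellise: the paper's proof cites the preorder $\precsim_{\mathbb{V}}$, which as defined compares $\mathrm{Var}(\mathbb{E}[\rho^{\pi,\mu}_{1:n-1}\mid\Phi(\tau_{0:n})])$, i.e.\ the variance of the conditional \emph{weight}, and then passes implicitly to the variance of the full estimator $\mathbb{E}[\rho^{\pi,\mu}_{1:n-1}\mid\Phi(\tau_{0:n})]\,\Psi(\tau_{0:n})$; you instead use the \SCF property to fold $\Psi(\tau_{0:n})$ inside the conditional expectation and apply the same tower-plus-variance-decomposition argument directly to $\rho^{\pi,\mu}_{1:n-1}\Psi(\tau_{0:n})$. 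This buys you a statement that matches the proposition as written (minimal variance of the estimator, not of the weight) without any intermediate step, at the cost of having to remark, as you do, that the argument of Proposition~\ref{prop:refine} applies verbatim to any square-integrable random variable, not just the weight; the paper's version is shorter but leaves that translation from weight variance to estimator variance to the reader. Your integrability caveat is the right one and is harmless in the finite-MDP setting of the main text.
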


This result gives guidance for choosing a conditioner $\Phi$ for a given target $\Psi$; we study several such algorithms in more detail in Section~\ref{sec:algorithms}.

\subsection{Beyond sufficient conditioning functionals}\label{sec:beyondvalid}

So far, we have enforced the condition that if $\Psi$ is a target functional, a conditioner $\Phi$ used to form the conditional importance-weighted term in Expression~\eqref{eq:conditional-is} should be such that $\Psi(\tau_{0:n})$ is expressible in terms of $\Phi(\tau_{0:n})$. This condition ensures that the resulting estimator is unbiased, as shown in Proposition~\ref{prop:ConditionalISVar}. However, relaxing this condition gives an even greater collection of off-policy estimators. Such estimators formed with functionals $\Phi$ which are \emph{not} \SCFs for $\Psi$ will generally be biased, but in many circumstances may be particularly low-variance, allowing for a bias-variance trade-off to be made.

\textbf{Example:}
\begin{itemize}[leftmargin=0.5cm,topsep=-4pt,itemsep=0pt,partopsep=0pt,parsep=0pt]
\item By taking $\Psi(\tau_{0:n}) = \sum_{t=0}^{n-1} \gamma^t R_t + \gamma^n V(X_n;\pi)$, and $\Phi(\tau_{0:n}) = 0$ (i.e., a function independent of the trajectory), we recover $n$-step uncorrected returns, popularly used in deep reinforcement learning.
\end{itemize}

\subsection{Bias, variance, and estimation difficulty}

We now discuss the various trade-offs inherent within the choice of $\Phi$ required by the \CIW framework.
\begin{figure}
    \centering
    \includegraphics[keepaspectratio,width=.3\textwidth]{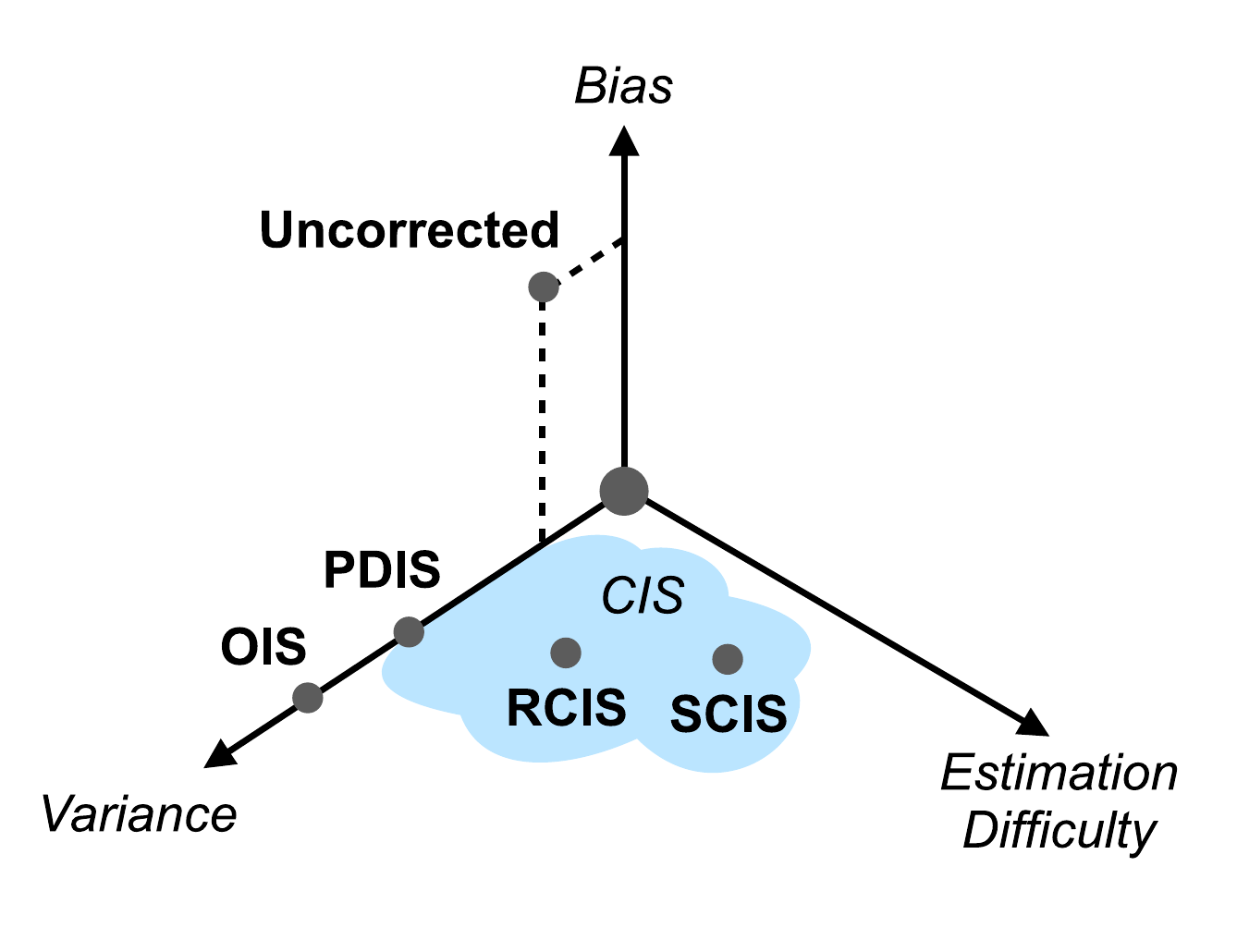}
    \caption{Schematic illustration of three traded-off quantities associated with \CIW estimators.}
    \label{fig:tradeoff}
\end{figure}
Proposition~\ref{prop:ConditionalISVar} shows that any $\Phi$ that is an \SCF for $\Psi$ yields an unbiased off-policy estimator. As described in Section~\ref{sec:beyondvalid}, choosing $\Phi$ which is \emph{not} an \SCF for $\Psi$ generally results in the introduction of bias, but may also offer a further substantial reduction in variance. In addition, there is the question of whether for a given $\Phi$, the importance weight above is available analytically (as in the case of per-decision importance sampling, for example), or whether the weight itself must be estimated, as is the case for several concrete \CIW algorithms, RCIS and SCIS, which we describe in Section~\ref{sec:algorithms}. Figure~\ref{fig:tradeoff} schematically illustrates the trade-offs between these three quantities made by several algorithms in the \CIW framework.

\section{Conditional importance sampling: Algorithms}\label{sec:algorithms}

Having set out the \CIW framework, we now investigate several novel algorithms which naturally arise from it.

\subsection{Return-conditioned importance sampling}

Consider taking the $n$-step truncated return as our target: $\Psi(\tau_{0:n}) = \sum_{t=0}^{n-1}\gamma^t R_t$, and following the optimality result of Proposition~\ref{prop:optimalconditioner}, taking the conditioner $\Phi = \Psi$ to be this return too. This yields a conditional importance weight of the form
\begin{align*}
    \mathbb{E}_{\trajectory{\mu}|_{(x,a)}}\!\left\lbrack \rho^{\pi,\mu}_{1:n-1} \middle| \sum_{t=0}^{n-1}\gamma^t R_t \right\rbrack \, .
\end{align*}
It is possible to express this conditional importance weight more directly, as the following result shows.
\begin{restatable}{proposition}{PropRDIS}\label{prop:rdis}
    Assume the support condition \eqref{eq:sc}. 
    For a given policy $\mu$ let $p^\mu|_{(x, a)}$ be the probability mass function of $\sum_{t=0}^{n-1}\gamma^t R_t$ under $\trajectory{\mu}|_{(x, a)}$. Then we have
    \begin{align}\label{eq:rciw}
        \mathbb{E}_{\trajectory{\mu}|_{(x,a)}}\!\left\lbrack \rho^{\pi,\mu}_{1:n-1} \middle| \sum_{t=0}^{n-1} \gamma^t R_t \right\rbrack = \frac{p^\pi|_{(x, a)}(\sum_{t=0}^{n-1}\gamma^t R_t)}{p^\mu|_{(x, a)}(\sum_{t=0}^{n-1}\gamma^t R_t)} \, .
    \end{align}
\end{restatable}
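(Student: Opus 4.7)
The plan is to exploit the identity, established within the proof of Proposition~\ref{prop:nstepunbiased}, that the product of action-probability ratios coincides with the full trajectory likelihood ratio:
\[
\rho^{\pi,\mu}_{1:n-1} = \frac{\partialtrajectory{\pi}{0:n}|_{(x,a)}(\tau_{0:n})}{\partialtrajectory{\mu}{0:n}|_{(x,a)}(\tau_{0:n})} \, .
\]
Once this substitution is made, the claim becomes a standard identity about conditional expectations of Radon--Nikodym derivatives: the $\mu$-conditional expectation of a $\pi$-vs-$\mu$ likelihood ratio, given a derived statistic, equals the ratio of the pushforward distributions of that statistic under $\pi$ and $\mu$.

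Concretely, writing $G(\tau_{0:n}) = \sum_{t=0}^{n-1} \gamma^t R_t$, I would compute the conditional expectation directly from its discrete definition. For each value $g$ in the support of $p^\mu|_{(x,a)}$,
\[
\mathbb{E}_{\trajectory{\mu}|_{(x,a)}}\!\left[ \rho^{\pi,\mu}_{1:n-1} \,\middle|\, G = g \right]
= \sum_{\tau_{0:n} : G(\tau_{0:n}) = g} \rho^{\pi,\mu}_{1:n-1}(\tau_{0:n}) \, \frac{\partialtrajectory{\mu}{0:n}|_{(x,a)}(\tau_{0:n})}{p^\mu|_{(x,a)}(g)} \, .
\]
Substituting the trajectory-ratio identity cancels the factor $\partialtrajectory{\mu}{0:n}|_{(x,a)}(\tau_{0:n})$, leaving
\[
\frac{1}{p^\mu|_{(x,a)}(g)} \sum_{\tau_{0:n} : G(\tau_{0:n}) = g} \partialtrajectory{\pi}{0:n}|_{(x,a)}(\tau_{0:n}) \, ,
\]
and the remaining sum is by definition $p^\pi|_{(x,a)}(g)$. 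Evaluating at the random point $g = \sum_{t=0}^{n-1} \gamma^t R_t$ delivers the stated formula.

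There is essentially no real obstacle here beyond careful bookkeeping; the only points that need a moment of attention are (a) invoking the support condition \eqref{eq:sc} to ensure $\partialtrajectory{\pi}{0:n}|_{(x,a)} \ll \partialtrajectory{\mu}{0:n}|_{(x,a)}$, so the trajectory-ratio representation of $\rho^{\pi,\mu}_{1:n-1}$ is valid on the set where $\mu$ places mass; and (b) reading the claimed equality as holding $p^\mu|_{(x,a)}$-almost everywhere in $g$, so that the division by $p^\mu|_{(x,a)}(g)$ is well-defined on the relevant support. The same argument extends verbatim to general state/action/reward spaces by replacing sums with integrals against the appropriate reference measures, as indicated in the footnote of Section~\ref{sec:isinrl}.
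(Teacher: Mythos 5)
Your proposal is correct and follows essentially the same route as the paper: both start from the identity $\rho^{\pi,\mu}_{1:n-1} = \partialtrajectory{\pi}{0:n}|_{(x,a)}(\tau_{0:n})/\partialtrajectory{\mu}{0:n}|_{(x,a)}(\tau_{0:n})$ and then reduce the conditional expectation to the ratio of return marginals, the only difference being that you unpack the conditional expectation as an explicit sum over trajectories with a given return, whereas the paper factorises each trajectory probability into the return marginal times the conditional trajectory distribution and notes that the expectation of the conditional-density ratio is $1$; the two manipulations are equivalent. Your remarks on the role of the support condition and the $p^\mu|_{(x,a)}$-almost-everywhere reading of the identity are apt.
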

That is, the optimal conditional importance weight for the $n$-step bootstrapped return is the ratio of the probabilities of the returns themselves under the target and behaviour distributions. This is appealing since it shifts the focus from (potentially irrelevant) policy probabilities directly to probabilities of generating a certain return value.
Due to this property, we term the corresponding estimator the \emph{return-conditioned importance sampling} (RCIS) estimator, given by:
\begin{align*}
    \mathbb{E}_{\trajectory{\mu}|_{(x,a)}}\!\left\lbrack \rho^{\pi,\mu}_{1:n-1} \middle| G \right\rbrack G + \rho^{\pi,\mu}_{1:n-1} \gamma^n V(X_n;\pi) \, ,
\end{align*}
where $G=\sum_{t=0}^{n-1} \gamma^t R_t$.
We note that several further variations of return-conditioned importance sampling are available, such as using an importance weight conditioned on the entire bootstrapped return.

There are strong connections here to distributional reinforcement learning \citep{morimura2010nonparametric,bellemare2017distributional,dabney2018distributional}, in which approximations to return distributions are learnt directly through interaction with the environment.

\subsection{Reward-conditioned and state-conditioned importance sampling}

The previous section establishes return-conditioned importance sampling as the optimal (with respect to estimator variance) unbiased means of importance weighting an entire return. However, this leaves open the question as to whether improvements can be made by importance weighting the individual terms of a return separately, as in per-decision importance sampling. If we interpret each reward $R_t$ in the return $\bar{G}^\pi_{0:n}$ as a target in its own right, Proposition~\ref{prop:optimalconditioner} shows that the corresponding optimal unbiased importance weight is
\begin{align*}
    \mathbb{E}_{\trajectory{\mu}|_{(x, a)}}\!\left\lbrack \frac{\partialtrajectory{\pi}{0:n}|_{(x, a)}(\tau_{0:n})}{\partialtrajectory{\mu}{0:n}|_{(x, a)}(\tau_{0:n})} \middle| R_t \right\rbrack \, .
\end{align*}
We refer to the use of these weights as \emph{reward-conditioned importance sampling}. 
Another estimator of interest that we mention due to its connections with existing off-policy evaluation algorithms and model-based reinforcement learning is given by (suboptimally) conditioning on the tuple $(X_t, A_t, R_t)$ instead of $R_t$ itself. In this case, we obtain the importance weight
\begin{align}\label{eq:scis}
    \mathbb{E}_{\trajectory{\mu}|_{(x, a)}}\!\left\lbrack \frac{\partialtrajectory{\pi}{0:n}|_{(x, a)}(\tau_{0:n})}{\partialtrajectory{\mu}{0:n}|_{(x, a)}(\tau_{0:n})} \middle| X_t, A_t, R_t \right\rbrack \, ,
\end{align}
which can be shown (see Appendix~\ref{sec:proofs}) to be equal to
\begin{align}\label{eq:sciw}
    \frac{p^\pi_{t}|_{(x, a)}(X_t)}{p^\mu_{t}|_{(x, a)}(X_t)} \times \frac{\pi(A_t|X_t)}{\mu(A_t|X_t)} \, ,
\end{align}
where $p^\pi_{t}|_{(x, a)}$ represents the distribution over the state at time $t$ starting at state-action pair $(x, a)$ and following $\pi$ thereafter. Thus, learning this conditional importance weight is closely related to learning the difference between the two transition models $p^\mu_{t}$ and $p^\pi_{t}$. For this reason, we refer to the use of the importance weight in Expression~\eqref{eq:scis} as \emph{state-conditioned importance sampling} (SCIS). There are close ties with the state distribution estimation methods mentioned earlier, such as marginalised importance sampling \citep{MIS,MIS2}, which estimates a similar quantity, but by focusing on learning these transitions distributions separately, rather than their ratio directly, as well as the work of \citet{liu2018breaking}, which learns a ratio of related distributions via a Bellman equation.

\subsection{Importance weight regression}\label{sec:regression}

A crucial practical question about the conditional importance weights appearing in Equations~\eqref{eq:rciw} and \eqref{eq:sciw} (and indeed in the general \CIW estimator in Equation~\eqref{eq:conditional-is}), is how these should be estimated when they are not available analytically. A general approach is given by solving the following regression problem:
\begin{align}\label{eq:regression}
    \min_{\theta} \mathbb{E}_{\trajectory{\mu}|_{(x,a)}}\!\!\!\left\lbrack\! \left(f_\theta(\Phi(\tau_{0:n})) - \frac{\partialtrajectory{\pi}{0:n}|_{(x, a)}(\tau_{0:n})}{\partialtrajectory{\mu}{0:n}|_{(x, a)}(\tau_{0:n})}\right)^{\!2} \right\rbrack  .
\end{align}
In words, we attempt to predict the trajectory importance weight via the function $f_\theta$ parameterised by $\theta$, using solely the information contained in $\Phi(\tau_{0:n})$. In addition, a single regressor could be used across all initial state-action pairs, taking these quantities as additional input (i.e., $f_{\theta}(x, a, \Phi(\tau_{0:n}))$), and thus allowing for generalisation across actions and states.
In practice, global minimisation of this objective will likely not be possible, and it may be desirable to modify the objective to take into account the magnitude of the target term $\Psi(\tau_{0:n})$ (e.g. the $n$-step return) to reduce the variance of the resulting approximate solution, for example. One such modified objective takes the form
\begin{align}\label{eq:regression+}
    \min_{\theta} \mathbb{E}_{\trajectory{\mu}|_{(x,a)}}\!\!\!\left\lbrack\! \left(\!\left(\!f_\theta(\Phi(\tau_{0:n}))\! -\! \frac{\partialtrajectory{\pi}{0:n}|_{(x, a)}(\tau_{0:n})}{\partialtrajectory{\mu}{0:n}|_{(x, a)}(\tau_{0:n})}\!\right) \!\!\Psi(\tau_{0:n})\!\! \right)^{\!\!2} \right\rbrack \! .
\end{align}
The following result grounds these objectives.
\begin{restatable}{proposition}{PropRegressionMin}
    A global minimum for each of the objectives in Expressions~\eqref{eq:regression} and \eqref{eq:regression+} is given by
    \begin{align*}
        f_\theta(\Phi(\tau_{0:n}))
        =
        \mathbb{E}_{\trajectory{\mu}|_{(x, a)}}\!\left\lbrack
            \frac{\partialtrajectory{\pi}{0:n}|_{(x, a)}(\tau_{0:n})}{\partialtrajectory{\mu}{0:n}|_{(x, a)}(\tau_{0:n})}
            \middle|
            \Phi(\tau_{0:n})
            \right\rbrack \, .
    \end{align*}
\end{restatable}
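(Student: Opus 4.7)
The plan is to invoke the classical $L^2$ projection property of conditional expectation, namely that for any square-integrable random variable $Y$ and any sub-$\sigma$-algebra $\mathcal{G}$, the random variable $\mathbb{E}[Y\mid\mathcal{G}]$ minimises $\mathbb{E}[(f - Y)^2]$ over all $\mathcal{G}$-measurable square-integrable $f$. Specialising to $Y$ equal to the trajectory importance ratio $\partialtrajectory{\pi}{0:n}|_{(x,a)}(\tau_{0:n})/\partialtrajectory{\mu}{0:n}|_{(x,a)}(\tau_{0:n})$ and $\mathcal{G} = \sigma(\Phi(\tau_{0:n}))$ immediately handles the unweighted objective \eqref{eq:regression}, since every candidate $f_\theta(\Phi(\tau_{0:n}))$ is $\mathcal{G}$-measurable by construction.

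For the weighted objective \eqref{eq:regression+}, the key observation is that $\Psi(\tau_{0:n})$ is a function of $\Phi(\tau_{0:n})$, hence $\sigma(\Phi)$-measurable, because $\Phi$ is a sufficient conditioning functional for $\Psi$. Conditioning on $\Phi(\tau_{0:n})$ and pulling the $\sigma(\Phi)$-measurable, non-negative factor $\Psi^2$ outside the inner conditional expectation, the objective rewrites as
\begin{align*}
\mathbb{E}\bigl[(f(\Phi) - Y)^2 \Psi^2\bigr] = \mathbb{E}\bigl[\Psi^2 \cdot \mathbb{E}\bigl[(f(\Phi) - Y)^2 \bigm| \Phi\bigr]\bigr].
\end{align*}
Using the standard orthogonality decomposition
\begin{align*}
\mathbb{E}\bigl[(f(\Phi) - Y)^2 \bigm| \Phi\bigr] = \bigl(f(\Phi) - \mathbb{E}[Y\mid\Phi]\bigr)^2 + \mathrm{Var}(Y\mid\Phi),
\end{align*}
the inner conditional MSE is minimised pointwise in $\Phi$ at $f(\Phi) = \mathbb{E}[Y\mid\Phi]$, with the residual $\mathrm{Var}(Y\mid\Phi)$ independent of $f$. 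Since $\Psi^2 \geq 0$, this pointwise minimiser is simultaneously optimal for the outer expectation, yielding the claim.

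The main obstacle is purely bookkeeping rather than conceptual. One must verify that the relevant quantities are square-integrable: the support condition \eqref{eq:sc} together with the finite action space gives bounded per-step ratios $\pi(A_t|X_t)/\mu(A_t|X_t)$, hence a bounded, and in particular $L^2$-integrable, target $Y$, so the $L^2$-projection argument applies. One should also flag that although $\mathbb{E}[Y\mid\Phi]$ is a global minimiser, it need not be the unique one for the weighted objective: on the event $\{\Psi = 0\}$ the value of $f$ is unconstrained, reflecting the fact that the weighted objective is insensitive to $f$ there.
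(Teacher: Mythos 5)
Your argument is correct and follows essentially the same route as the paper's proof: condition on $\Phi(\tau_{0:n})$, pull the factor $\Psi(\tau_{0:n})^2$ outside the inner conditional expectation using the fact that $\Phi$ is an \SCF for $\Psi$, and minimise the inner conditional mean-squared error pointwise at the conditional expectation of the trajectory importance ratio. Your added remarks on square-integrability under \eqref{eq:sc} and on non-uniqueness of the minimiser on the event $\{\Psi(\tau_{0:n}) = 0\}$ are sensible refinements but do not change the substance of the argument.
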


\section{Experiments}\label{sec:experiments}

\begin{figure*}[!ht]
    \centering
    \includegraphics[keepaspectratio,width=.88\textwidth]{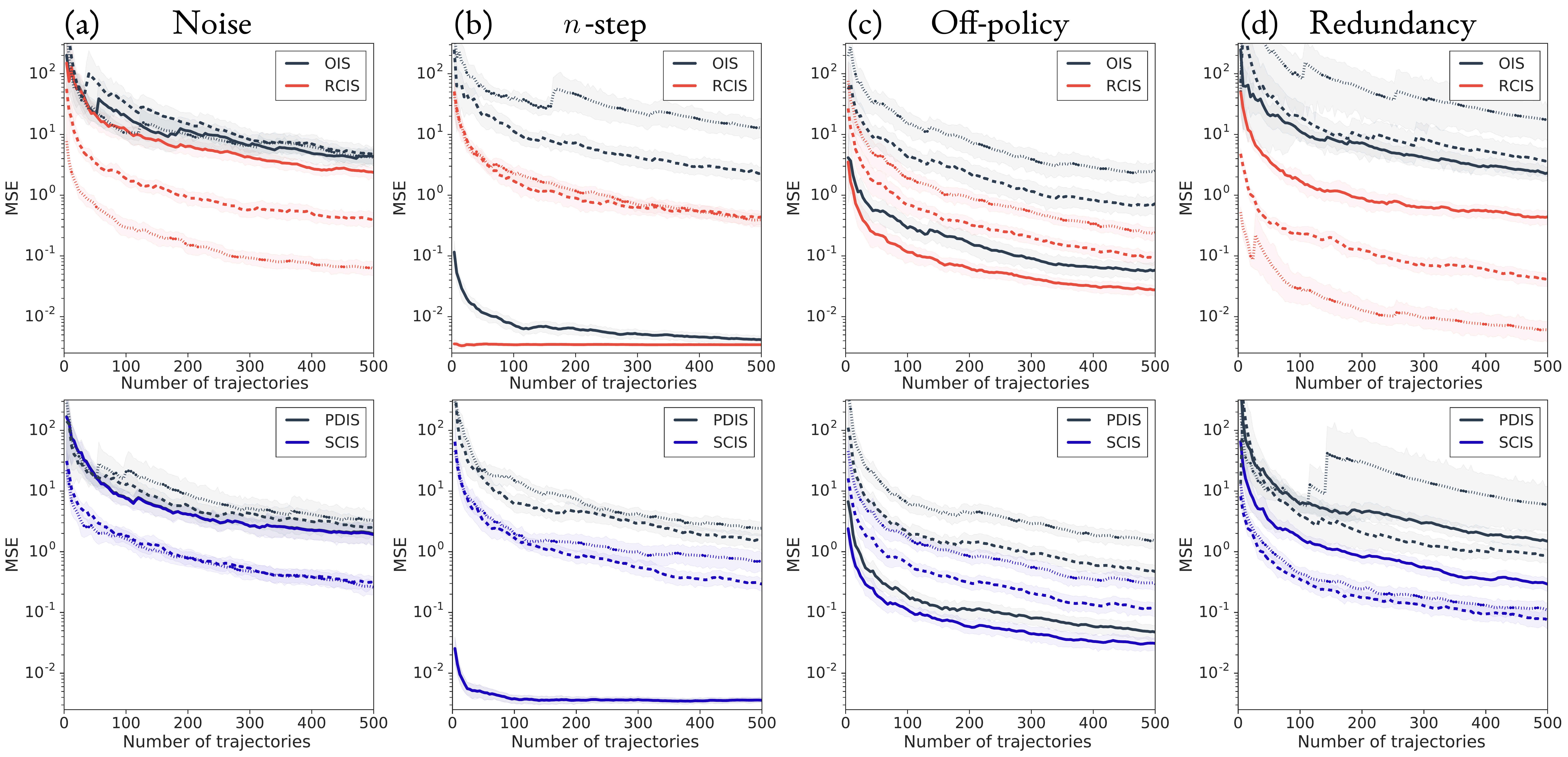}
    \vspace{-0.2cm}
    \caption{Operator estimation MSE as a function of sample number for OIS, PDIS, RCIS, and SCIS, on a chain MDP with varying (\textbf{a}) levels of transition noise, (\textbf{b}) $n$-step updates, (\textbf{c}) separation of policies, and (\textbf{d}) redundancy in action sets, as outlined in Table~\ref{tab:params}. Shaded regions indicate bootstrapped 95\% confidence intervals.}
    \label{fig:operatorestimation}
\end{figure*}

To complement the \CIW framework and the theoretical analysis conducted in earlier sections, we provide several simple illustrative experiments that demonstrate (i) that \CIW algorithms can deliver substantial variance reduction, and (ii) that the regression approach of Section~\ref{sec:regression} can be used to obtain practical implementations of \CIW algorithms.
We exhibit results on a classic chain environment, with both tabular and linear function approximation methods; full experiment specifications are given in Appendix~\ref{sec:appendix:experiments}.

\subsection{Operator estimation}

We begin with the task of off-policy estimation of the application of the $n$-step Bellman operator $(T^\pi)^n$ to a fixed Q-function via trajectories generated by following the behaviour policy $\mu$. This serves as a precursor for off-policy evaluation, and allows us to disentangle the variance reduction achieved by conditional importance sampling from compounding bootstrapping effects.

Results are shown for a chain environment in Figure~\ref{fig:operatorestimation}. We plot MSE for both OIS and PDIS, as well as conditional importance sampling versions of these algorithms, RCIS and SCIS, with the conditional importance weights provided by a pre-computed \emph{oracle}. The use of an oracle allows us to separate the variance reduction effects of conditional importance sampling from the potential errors introduced by the regression approach described in Section~\ref{sec:regression}. In each of the four sub-plots of Figure~\ref{fig:operatorestimation}, we vary one property of the estimation problem, to illustrate how performance of the methods under study changes. In all cases, we plot results for three different settings of the parameter in question, with solid lines corresponding to \emph{low} values of this parameter, and finely-dashed lines corresponding to high values of the parameter; see Table~\ref{tab:params}. ``Noise'' refers to the transition noise added to the chain, $\beta$ controls mismatch between the target $\pi$ and behaviour $\mu$ policies, by replacing the target with a mixture $\beta\pi+(1-\beta)\mu$, and ``extra actions'' describes how many extra (redundant) copies of each action are added to the environment.

\begin{table}[!h]
    \centering
    \captionsetup{justification=centering}
    \caption{Parameter values for Figure~\ref{fig:operatorestimation}.}
    \vspace{-0.35cm}
    \label{tab:params}
    \begin{tabular}{ccccc}
    \toprule
        Line type & Noise & $n$ & $\beta$ & Extra actions \\ \midrule
        Solid & 0\% & 2 & 0.1 & 0 \\ 
        Dashed & 10\% & 4 & 0.5 & 1 \\ 
        Finely-dashed & 50\% & 7 & 1.0 & 3\\ \bottomrule
    \end{tabular}
\end{table}

In all cases, the \CIW methods outperform their existing counterparts, with more pronounced improvements in the presence of larger $n$, more transition noise, greater off-policyness, and high level of action redundancy. 

\subsection{Policy evaluation}\label{sec:policy-evaluation}

\begin{figure*}[!ht]
    \centering
    \includegraphics[keepaspectratio,width=.88\textwidth]{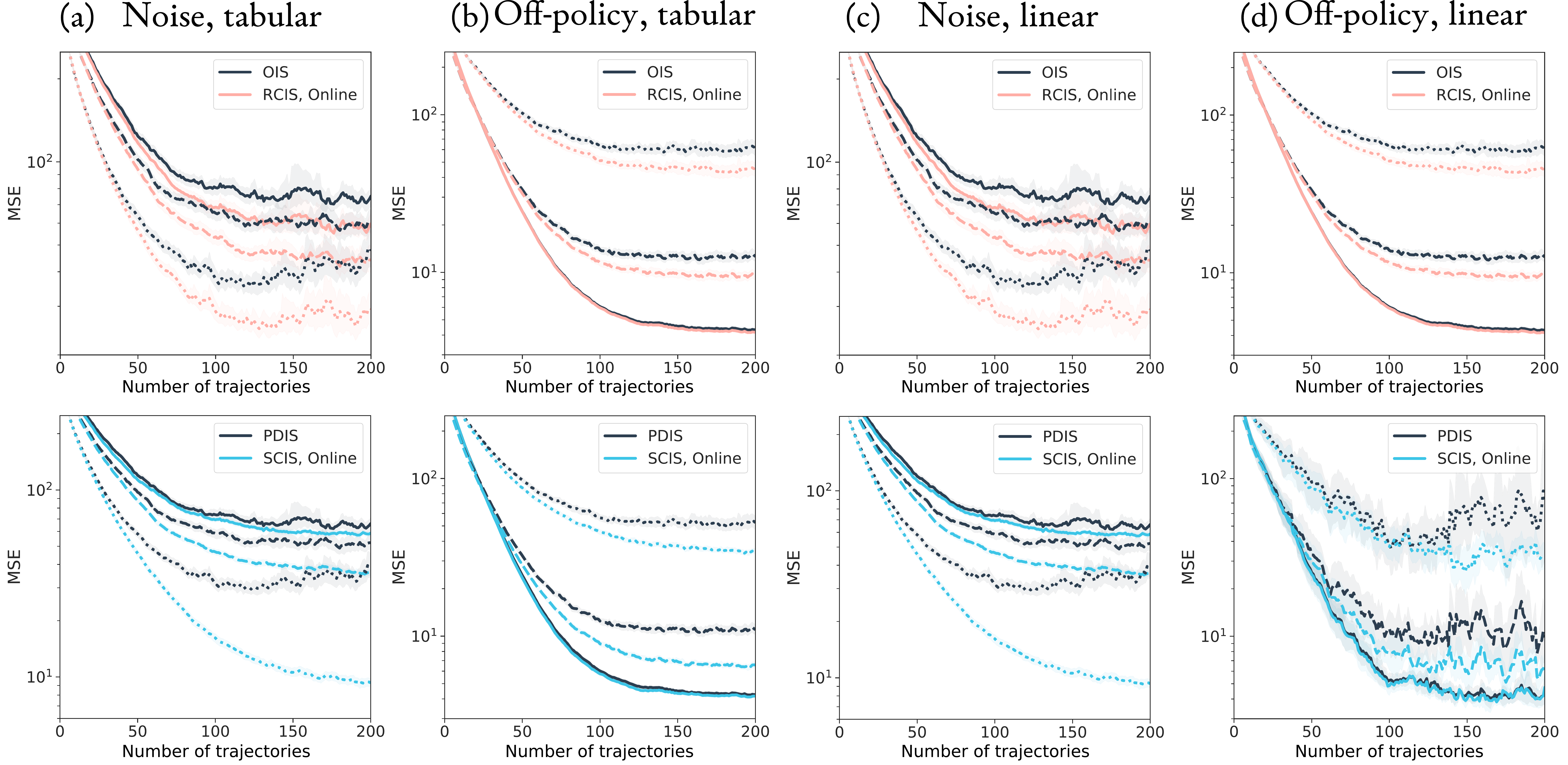}
    \vspace{-0.2cm}
    \caption{Policy evaluation MSE as a function of number of trajectories for OIS, RCIS, PDIS, and SCIS, with both tabular and function approximation variants. Shaded regions indicate bootstrapped 95\% confidence intervals.}
    \label{fig:policyevaluation}
    \vspace{-0.4cm}
\end{figure*}

We now consider the full task of off-policy policy evaluation using $n$-step returns along trajectories generated by a behaviour policy, with importance weights provided by existing and new \CIW algorithms. We report results for the same chain environment
as for the operator estimation experiments in Figure~\ref{fig:policyevaluation}, with varying levels of transition noise and off-policyness as described in Table~\ref{tab:params}. We give results for online variants of \CIW algorithms by solving the empirical version of Expression~\eqref{eq:regression} (based on the observed trajectories) exactly for each different value of the functional observed; complete results including the oracle versions of the \CIW algorithms are given in Appendix~\ref{sec:appendix:additional-results}.
We show results for tabular evaluation, as well as versions using tile-coding linear function approximation \citep{Sutton2018} (full details in Appendix~\ref{sec:appendix:policy-evaluation-details}). Generally, we observe that the online versions of the \CIW algorithms generally give a noticeable improvement over their non-conditional versions. These results serve as a proof of concept that practical, online versions of the \CIW algorithms introduced in Section~\ref{sec:algorithms} can improve over non-conditional baselines. We expect that with further research into regression methods described in Section~\ref{sec:regression}, the gap between oracle and online \CIW algorithms can be narrowed.

\section{Discussion}

We have unified several existing importance sampling algorithms via a new conceptual framework based on conditional expectations of importance weights, allowing for straightforward analysis and comparison, in addition to the development of new algorithms.

There remain many interesting investigations to be carried out towards theoretically and empirically understanding how the \CIW framework interacts with complementary approaches for variance reduction, such as weighted importance sampling and importance weight truncation. We expect several further directions to prove fruitful for future work, including further exploration of the space of \CIW algorithms, scaling up \CIW algorithms to work in combination with deep RL architectures, and further investigation into relationships between particular \CIW algorithms with other sub-fields of RL (such as RCIS and distributional RL).

\section*{Acknowledgements}

We thank Adam White for detailed feedback on an earlier version of this paper, and the anonymous reviewers for helpful comments during the review process.

\bibliographystyle{abbrvnat}
\bibliography{main}

\newpage
\onecolumn
\appendix

\section*{\centering APPENDICES: Conditional Importance Sampling for Off-Policy Learning}

\section{Proofs}\label{sec:proofs}

\PropConditionalISVar*

\begin{proof}
    The proof of unbiasedness follows the logic of Proposition~\ref{prop:nstepunbiased}'s proof and the proof for the variance upper bound follows the logic of Proposition~\ref{prop:PDISvar}'s proof. Beginning with unbiasedness, we make the following calculation:
    \begin{align*}
        \mathbb{E}_{\partialtrajectory{\mu}{0:n}|_{(x, a)}}\!\left\lbrack \mathbb{E}_{\trajectory{\mu}|_{(x, a)}}\!\left\lbrack \frac{\partialtrajectory{\pi}{0:n}|_{(x, a)}(\tau_{0:n})}{\partialtrajectory{\mu}{0:n}|_{(x, a)}(\tau_{0:n})} \middle| \Phi(\tau_{0:n}) \right\rbrack \Psi(\tau_{0:n}) \right\rbrack & \overset{(a)}{=}
        \mathbb{E}_{\partialtrajectory{\mu}{0:n}|_{(x, a)}}\!\left\lbrack \mathbb{E}_{\trajectory{\mu}|_{(x, a)}}\!\left\lbrack \frac{\partialtrajectory{\pi}{0:n}|_{(x, a)}(\tau_{0:n})}{\partialtrajectory{\mu}{0:n}|_{(x, a)}(\tau_{0:n})}  \Psi(\tau_{0:n}) \middle| \Phi(\tau_{0:n}) \right\rbrack \right\rbrack \\
        &\overset{(b)}{=} \mathbb{E}_{\partialtrajectory{\mu}{0:n}|_{(x, a)}}\!\left\lbrack \frac{\partialtrajectory{\pi}{0:n}|_{(x, a)}(\tau_{0:n})}{\partialtrajectory{\mu}{0:n}|_{(x, a)}(\tau_{0:n})}  \Psi(\tau_{0:n})  \right\rbrack \\
        & \overset{(c)}{=} \mathbb{E}_{\partialtrajectory{\pi}{0:n}|_{(x, a)}}\!\left\lbrack \Psi(\tau_{0:n})  \right\rbrack \, ,
    \end{align*}
    where $(a)$ follows since $\Phi$ is an \SCF for $\Psi$ (and hence $\Psi(\tau_{0:n})$ is fully determined by $\Phi(\tau_{0:n})$), (b) follows from the tower law of conditional expectations, and (c) follows from standard importance sampling theory. 
    
    For the variance result, we observe that
    \begin{align*}
        \mathbb{E}_{\partialtrajectory{\mu}{0:n}|_{(x, a)}}\!\left\lbrack \frac{\partialtrajectory{\pi}{0:n}|_{(x, a)}(\tau_{0:n})}{\partialtrajectory{\mu}{0:n}|_{(x, a)}(\tau_{0:n})} \middle| \Phi(\tau_{0:n}) \right\rbrack \Psi(\tau_{0:n})  & =
        \mathbb{E}_{\partialtrajectory{\mu}{0:n}|_{(x, a)}}\!\left\lbrack \frac{\partialtrajectory{\pi}{0:n}|_{(x, a)}(\tau_{0:n})}{\partialtrajectory{\mu}{0:n}|_{(x, a)}(\tau_{0:n})}  \Psi(\tau_{0:n}) \middle| \Phi(\tau_{0:n}) \right\rbrack  \, ,
    \end{align*}
    which follows since $\Phi$ is an \SCF for $\Psi$.
    Therefore, this estimator is a conditional expectation of the OIS estimator
    \begin{align*}
        \frac{\partialtrajectory{\pi}{0:n}|_{(x, a)}(\tau_{0:n})}{\partialtrajectory{\mu}{0:n}|_{(x, a)}(\tau_{0:n})}  \Psi(\tau_{0:n}) \, ,
    \end{align*}
    and therefore the conclusion follows by direct application of Equation~\eqref{eq:condvar} which was used to establish Proposition~\ref{prop:PDISvar}, taking $Z_1 = \frac{\partialtrajectory{\pi}{0:n}|_{(x, a)}(\tau_{0:n})}{\partialtrajectory{\mu}{0:n}|_{(x, a)}(\tau_{0:n})} \Psi(\tau_{0:n})$ and $Z_2 = \Phi(\tau_{0:n})$.
\end{proof}

\PropRefine*

\begin{proof}
    Assume we have $\Phi_1 \precsim \Phi_2$ for two \scfslong $\Phi_1,\Phi_2$ for $\Psi$.
    Since $\Phi_1(\tau_{0:n})$ is a function of $\Phi_2(\tau_{0:n})$, we have that $\mathbb{E}[\rho^{\pi,\mu}_{1:n-1} | \Phi_1(\tau_{0:n})] = \mathbb{E}[\mathbb{E}[\rho^{\pi,\mu}_{1:n-1} | \Phi_2(\tau_{0:n})]|\Phi_1(\tau_{0:n})]$ by the tower property for conditional expectations.
    The statement now follows from the conditional variance formula \eqref{eq:condvar}.
\end{proof}

\PropOptimalConditioner*

\begin{proof}
    This follows by first observing that $\Psi(\tau_{0:n})$ is a minimal sufficient conditioning functional for $\Psi$ with respect to the ordering induced by $\precsim$; this is immediate from the definition. Next, since $\precsim_\mathbb{V}$ refines $\precsim$ (by Proposition~\ref{prop:refine}), we have that $\Psi(\tau_{0:n})$ is also a minimal sufficient conditioning functional with respect to $\precsim_{\mathbb{V}}$, and the statement follows.
\end{proof}

\PropRDIS*

\begin{proof}
    As in the discussion in Section~\ref{sec:isinrl}, we have
    \begin{align*}
        \rho^{\pi,\mu}_{1:n-1}
        = \frac{\partialtrajectory{\pi}{0:n}|_{(x,a)}(\tau_{0:n})}{\partialtrajectory{\mu}{0:n}|_{(x,a)}(\tau_{0:n})}
        \, .
    \end{align*}
    We then decompose
    \begin{align*}
        \mathbb{E}_{\partialtrajectory{\mu}{0:n}|_{(x, a)}} \left\lbrack \frac{\partialtrajectory{\pi}{0:n}|_{(x,a)}(\tau_{0:n})}{\partialtrajectory{\mu}{0:n}|_{(x,a)}(\tau_{0:n})} \middle| \sum_{t=0}^{n-1}\gamma^t R_t \right\rbrack
        = & \mathbb{E}_{\partialtrajectory{\mu}{0:n}|_{(x, a)}}\!\left\lbrack \frac{p_{\pi}|_{(x, a)}(\sum_{t=0}^{n-1}\gamma^t R_t)\partialtrajectory{\pi}{0:n}|_{(x,a)}(\tau_{0:n} | \sum_{t=0}^{n-1}\gamma^t R_t)}{p_{\mu}|_{(x, a)}(\sum_{t=0}^{n-1}\gamma^t R_t)\partialtrajectory{\mu}{0:n}|_{(x,a)}(\tau_{0:n}|\sum_{t=0}^{n-1}\gamma^t R_t)} \middle| \sum_{t=0}^{n-1}\gamma^t R_t \right\rbrack \\
        = & \frac{p_{\pi}|_{(x, a)}(\sum_{t=0}^{n-1}\gamma^t R_t)}{p_{\mu}|_{(x, a)}(\sum_{t=0}^{n-1}\gamma^t R_t)} \mathbb{E}_{\partialtrajectory{\mu}{0:n}|_{(x, a)}}\!\left\lbrack \frac{\partialtrajectory{\pi}{0:n}|_{(x,a)}(\tau_{0:n} | \sum_{t=0}^{n-1}\gamma^t R_t)}{\partialtrajectory{\mu}{0:n}|_{(x,a)}(\tau_{0:n}|\sum_{t=0}^{n-1}\gamma^t R_t)} \middle| \sum_{t=0}^{n-1}\gamma^t R_t \right\rbrack \\
        = & \frac{p_{\pi}|_{(x, a)}(\sum_{t=0}^{n-1}\gamma^t R_t)}{p_{\mu}|_{(x, a)}(\sum_{t=0}^{n-1}\gamma^t R_t)} \, ,
    \end{align*}
    as required.
\end{proof}

\PropRegressionMin*

\begin{proof}
    We begin by restating Expression~\eqref{eq:regression}, and use the tower law of conditional expectation  as follows:
    \begin{align*}
    & \mathbb{E}_{\trajectory{\mu}|_{(x,a)}}\!\left\lbrack \left(f_\theta(\Phi(\tau_{0:n})) - \frac{\partialtrajectory{\pi}{0:n}|_{(x, a)}(\tau_{0:n})}{\partialtrajectory{\mu}{0:n}|_{(x, a)}(\tau_{0:n})}\right)^2 \right\rbrack  \\
    = & \mathbb{E}_{\trajectory{\mu}|_{(x,a)}}\!\left\lbrack \mathbb{E}_{\trajectory{\mu}|_{(x,a)}}\!\left\lbrack \left(f_\theta(\Phi(\tau_{0:n})) - \frac{\partialtrajectory{\pi}{0:n}|_{(x, a)}(\tau_{0:n})}{\partialtrajectory{\mu}{0:n}|_{(x, a)}(\tau_{0:n})}\right)^2 \middle| \Phi(\tau_{0:n}) \right\rbrack \right\rbrack \, .
    \end{align*}
    The inner conditional expectation is of the form $\mathbb{E}_Y[(z - Y)^2]$; viewed as a function of $z$, it is well known that the minimiser of such an expression is $z = \mathbb{E}[Y]$. Thus, for a fixed value of $\Phi(\tau_{0:n})$, the optimal value of $f_\theta(\Phi(\tau_{0:n}))$ is given by
    \begin{align*}
        \mathbb{E}_{\trajectory{\mu}}\left\lbrack \frac{\partialtrajectory{\pi}{0:n}|_{(x, a)}(\tau_{0:n})}{\partialtrajectory{\mu}{0:n}|_{(x, a)}(\tau_{0:n})} \middle| \Phi(\tau_{0:n}) \right\rbrack \, .
    \end{align*}
    Therefore, the global optimiser of Expression~\eqref{eq:regression} is given precisely by the function
    \begin{align*}
        f_\theta(\Phi(\tau_{0:n})) = \mathbb{E}_{\trajectory{\mu}}\left\lbrack \frac{\partialtrajectory{\pi}{0:n}|_{(x, a)}(\tau_{0:n})}{\partialtrajectory{\mu}{0:n}|_{(x, a)}(\tau_{0:n})} \middle| \Phi(\tau_{0:n}) \right\rbrack \, ,
    \end{align*}
    as required. For Expression~\eqref{eq:regression+}, in a similar manner we can write the following:
    \begin{align*}
        & \mathbb{E}_{\trajectory{\mu}|_{(x,a)}}\!\left\lbrack \left(f_\theta(\Phi(\tau_{0:n})) - \frac{\partialtrajectory{\pi}{0:n}|_{(x, a)}(\tau_{0:n})}{\partialtrajectory{\mu}{0:n}|_{(x, a)}(\tau_{0:n})}\right)^2 \Psi(\tau_{0:n})^2 \right\rbrack  \\
        = & \mathbb{E}_{\trajectory{\mu}|_{(x,a)}}\!\left\lbrack \mathbb{E}_{\trajectory{\mu}|_{(x,a)}}\!\left\lbrack \left(f_\theta(\Phi(\tau_{0:n})) - \frac{\partialtrajectory{\pi}{0:n}|_{(x, a)}(\tau_{0:n})}{\partialtrajectory{\mu}{0:n}|_{(x, a)}(\tau_{0:n})}\right)^2 \middle| \Phi(\tau_{0:n}) \right\rbrack \Psi(\tau_{0:n})^2 \right\rbrack \, ,
    \end{align*}
    with the equality following from the fact that $\Phi$ is a \scflong for $\Psi$. Now we may proceed in an identical manner to that for Expression~\eqref{eq:regression}, and the claim follows.
\end{proof}

We also record a precise result on the form of the SCIS weights described in Section~\ref{sec:algorithms} below.

\begin{proposition}
    As described in Section~\ref{sec:algorithms}, assuming the support condition, we have
    \begin{align*}
        \mathbb{E}_{\trajectory{\mu}|_{(x, a)}}\!\left\lbrack \frac{\partialtrajectory{\pi}{0:n}|_{(x, a)}(\tau_{0:n})}{\partialtrajectory{\mu}{0:n}|_{(x, a)}(\tau_{0:n})} \middle| X_t, A_t, R_t \right\rbrack = \frac{p^\pi_{t}|_{(x, a)}(X_t)}{p^\mu_{t}|_{(x, a)}(X_t)} \times \frac{\pi(A_t|X_t)}{\mu(A_t|X_t)} \, .
    \end{align*}
\end{proposition}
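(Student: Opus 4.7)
The plan is to reuse the same identity that drove the proof of Proposition~\ref{prop:rdis}: for any \SCF $\Phi$, the conditional expectation of the trajectory importance ratio given $\Phi(\tau_{0:n})$ equals the ratio of the pushforward probability mass functions of $\Phi(\tau_{0:n})$ under $\trajectory{\pi}|_{(x,a)}$ and $\trajectory{\mu}|_{(x,a)}$. Concretely, writing $q^\pi$ and $q^\mu$ for these pushforward PMFs, the same decomposition step used in the proof of Proposition~\ref{prop:rdis} (factoring the trajectory density as marginal-of-$\Phi$ times conditional-given-$\Phi$) yields
\begin{align*}
\mathbb{E}_{\partialtrajectory{\mu}{0:n}|_{(x,a)}}\!\left\lbrack \frac{\partialtrajectory{\pi}{0:n}|_{(x,a)}(\tau_{0:n})}{\partialtrajectory{\mu}{0:n}|_{(x,a)}(\tau_{0:n})} \,\middle|\, \Phi(\tau_{0:n}) \right\rbrack = \frac{q^\pi(\Phi(\tau_{0:n}))}{q^\mu(\Phi(\tau_{0:n}))}\,.
\end{align*}
First I would state this as a general lemma (or simply invoke it by analogy with Proposition~\ref{prop:rdis}), and then specialise to $\Phi(\tau_{0:n}) = (X_t, A_t, R_t)$.

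The remaining work is purely a factorisation of the joint PMF of $(X_t, A_t, R_t)$. Under either policy, conditioning on $(X_0,A_0)=(x,a)$ and marginalising out $(X_1,A_1,R_0,\ldots,X_{t-1},A_{t-1},R_{t-1},R_t)$ except through the final triple, the law of $(X_t, A_t, R_t)$ factors as: the marginal of $X_t$ (which is exactly $p^\pi_t|_{(x,a)}$ or $p^\mu_t|_{(x,a)}$ by definition), the conditional $\pi(A_t|X_t)$ or $\mu(A_t|X_t)$ (since policies are Markov), and the reward kernel $\mathcal{R}(R_t|X_t,A_t)$ (which is a property of the MDP and is identical under both policies). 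Substituting these into the ratio $q^\pi/q^\mu$, the reward kernel cancels and one reads off
\begin{align*}
\frac{q^\pi(X_t,A_t,R_t)}{q^\mu(X_t,A_t,R_t)} = \frac{p^\pi_t|_{(x,a)}(X_t)\,\pi(A_t|X_t)\,\mathcal{R}(R_t|X_t,A_t)}{p^\mu_t|_{(x,a)}(X_t)\,\mu(A_t|X_t)\,\mathcal{R}(R_t|X_t,A_t)} = \frac{p^\pi_t|_{(x,a)}(X_t)}{p^\mu_t|_{(x,a)}(X_t)}\cdot\frac{\pi(A_t|X_t)}{\mu(A_t|X_t)}\,,
\end{align*}
which is the desired identity. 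The support condition \eqref{eq:sc} ensures that denominators vanish only on events of $\mu$-probability zero, so the ratio is well defined $\mu$-almost surely.

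The only mildly technical step is justifying the factorisation of the joint PMF of $(X_t,A_t,R_t)$ conditional on $(X_0,A_0)=(x,a)$; this is a routine marginalisation over the intermediate variables using the chain rule for the trajectory density, combined with the Markov property of $\pi$ and $\mu$ and the fact that the reward kernel $\mathcal{R}$ is policy-independent. I do not anticipate a genuine obstacle here — all the conceptual work was already done in Proposition~\ref{prop:rdis}, and this result is simply the same argument specialised to a different choice of \SCF.
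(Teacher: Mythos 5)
Your proof is correct, but it follows a different route from the paper's. The paper's proof works directly with the trajectory law: it factorises $\partialtrajectory{\pi}{0:n}|_{(x,a)}(\tau_{0:n})$ around time $t$ into $p^\pi_t|_{(x,a)}(X_t)\,\pi(A_t|X_t)$ times the conditional laws of the past segment (given $X_t$) and the future segment (given $(X_t,A_t)$), invokes conditional independence of these two segments given $(X_t,A_t,R_t)$, and observes that each residual conditional expectation is an expectation of a density ratio under its own denominator measure, hence equals $1$. You instead promote the argument of Proposition~\ref{prop:rdis} to a general lemma --- for any discrete-valued \SCF $\Phi$, the conditional weight $\mathbb{E}_{\trajectory{\mu}|_{(x,a)}}[\partialtrajectory{\pi}{0:n}|_{(x,a)}(\tau_{0:n})/\partialtrajectory{\mu}{0:n}|_{(x,a)}(\tau_{0:n})\,|\,\Phi(\tau_{0:n})]$ equals the ratio of the pushforward laws of $\Phi(\tau_{0:n})$ under $\pi$ and $\mu$ --- and then compute the law of $(X_t,A_t,R_t)$ by marginalisation, using the Markov property so that it factors as $p^\pi_t|_{(x,a)}(X_t)\,\pi(A_t|X_t)\,\mathcal{R}(R_t|X_t,A_t)$, with the policy-independent reward kernel cancelling in the ratio. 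Both arguments are sound; yours is more modular and makes explicit the unifying fact that in the finite setting every \CIW weight is just a likelihood ratio of the conditioning statistic's distributions (covering RCIS and SCIS in one stroke), while the paper's direct factorisation avoids stating an extra lemma and exhibits concretely which parts of the trajectory density integrate out. Two small points: you should actually prove the general lemma rather than invoke it ``by analogy'' (though its proof is verbatim the computation in Proposition~\ref{prop:rdis} with the return replaced by $\Phi$), and, as in the paper's own proof, the identity is implicitly for $1 \le t \le n-1$, since for $t=0$ the pair $(X_0,A_0)$ is fixed by conditioning and the ratio $\pi(A_0|X_0)/\mu(A_0|X_0)$ would not appear.
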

\begin{proof}
    The proof follows by factorising the trajectory probabilities $\partialtrajectory{\pi}{0:n}|_{(x, a)}(\tau_{0:n})$, $\partialtrajectory{\mu}{0:n}|_{(x, a)}(\tau_{0:n})$ in the following manner, using the Markov property of the environment:
    \begin{align*}
        \partialtrajectory{\pi}{0:n}|_{(x, a)}(\tau_{0:n}) = p^\pi_{t}|_{(x, a)}(X_t) \pi(A_t|X_t) \partialtrajectory{\pi}{t:n}|_{(X_t, A_t)}(\tau_{t:n}) \partialtrajectory{\pi}{0:t-1}|_{(x, a)}(\tau_{0:t-1} |X_t) \, ,
    \end{align*}
    where we write $\partialtrajectory{\pi}{0:t-1}|_{(x, a)}(\tau_{0:t-1} |X_t)$ for probability mass associated with the trajectory $\tau_{0:t-1}$ under $\partialtrajectory{\pi}{0:t}$, conditional on the trajectory visiting the state $X_t$ at time $t$. Using conditional independence, we therefore have
    \begin{align*}
        & \mathbb{E}_{\trajectory{\mu}|_{(x, a)}}\!\left\lbrack \frac{\partialtrajectory{\pi}{0:n}|_{(x, a)}(\tau_{0:n})}{\partialtrajectory{\mu}{0:n}|_{(x, a)}(\tau_{0:n})} \middle| X_t, A_t, R_t \right\rbrack \\
        = & \frac{p^\pi_{t}|_{(x, a)}(X_t) \pi(A_t|X_t)}{p^\mu_{t}|_{(x, a)}(X_t) \mu(A_t|X_t)}
        \mathbb{E}_{\trajectory{\mu}|_{(x, a)}}\!\left\lbrack \frac{\partialtrajectory{\pi}{t:n}|_{(X_t, A_t)}(\tau_{t:n}) \partialtrajectory{\pi}{0:t}|_{(x, a)}(\tau_{0:t-1} |X_t)}{\partialtrajectory{\mu}{t:n}|_{(X_t, A_t)}(\tau_{t:n}) \partialtrajectory{\mu}{0:t}|_{(x, a)}(\tau_{0:t-1} |X_t)} \middle| X_t, A_t, R_t \right\rbrack \\
        = & \frac{p^\pi_{t}|_{(x, a)}(X_t) \pi(A_t|X_t)}{p^\mu_{t}|_{(x, a)}(X_t) \mu(A_t|X_t)}
        \mathbb{E}_{\trajectory{\mu}|_{(x, a)}}\!\left\lbrack \frac{\partialtrajectory{\pi}{t:n}|_{(X_t, A_t)}(\tau_{t:n}) }{ \partialtrajectory{\mu}{t:n}|_{(X_t, A_t)}(\tau_{t:n})} \middle| X_t, A_t \right\rbrack
        \mathbb{E}_{\trajectory{\mu}|_{(x, a)}}\!\left\lbrack \frac{\partialtrajectory{\pi}{0:t}|_{(x, a)}(\tau_{0:t-1} |X_t)}{ \partialtrajectory{\mu}{0:t}|_{(x, a)}(\tau_{0:t-1} |X_t)} \middle| X_t \right\rbrack \\
        = & \frac{p^\pi_{t}|_{(x, a)}(X_t) \pi(A_t|X_t)}{p^\mu_{t}|_{(x, a)}(X_t) \mu(A_t|X_t)} \, ,
    \end{align*}
    as required. The final equality follows since both of the conditional expectations are in fact expectations of Radon-Nikodym derivatives under the measure in the ``denominator'' of the derivative, and hence evaluate to $1$ almost surely.
\end{proof}

\section{Experimental details}\label{sec:appendix:experiments}

\subsection{Environment}\label{sec:appendix:envs}

\textbf{Chain.} We use a $6$-state chain environment, with absorbing states at each end of the chain. Two actions, \texttt{left} and \texttt{right}, are available at each state of the chain. Transitions corrupted with $p$\% noise means that with probability $p$, a transition to a uniformly-random adjacent state (independent of the action taken) occurs. Each non-terminal step incurs a reward of $+1$, whilst reaching an absorbing state incurs a one-off reward of $+10$, and the episode then terminates. The initial state of the environment is taken to be the third state from the left. Figure~\ref{fig:chain} provides an illustration.

\begin{figure}[h]
\centering
    \includegraphics[keepaspectratio,width=.65\textwidth]{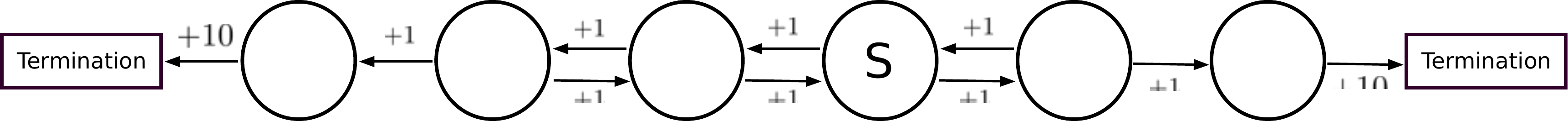}
    \caption{Illustration of the chain environment.}
    \label{fig:chain}
\end{figure}

\subsection{Other experimental details: operator estimation}\label{sec:appendix:operator-estimation-details}

Throughout, the discount factor is taken to be $\gamma = 0.99$, and the Q-function used to form the target $(T^\pi)^n Q$ has its entries sampled independently from the $N(0,0.1)$ distribution. The policies $\pi$ and $\mu$ are drawn independently, with each $\pi(\cdot|x)$ and $\mu(\cdot|x)$ drawn independently from a Dirichlet($1,\ldots,1$) distribution. Default values of parameters are taken as $n=5$, the transition noise level is set to $10\%$, and the learning rate is set to $0.1$, and 100 repetitions of each experiments are performed to compute the bootstrapped confidence intervals.

\subsection{Other experimental details: policy evaluation}\label{sec:appendix:policy-evaluation-details}

The environment and default parameters are exactly the same as in the operator estimation experiments, with the exception that the Q-function is initialised so that all coordinates are $0$, and $n=3$. We estimate bootstrap confidence intervals using 500 repetitions of each experiment. In the linear function approximation experiments, we use a version of tile-coding \citep{Sutton2018}; the specification parametrisation we use is as follows. For a chain of length $K$, we take a weight vector $\mathbf{w}=(w_{k,a} | k \in [K-1], a \in \actionspace) \in \mathbb{R}^{(K-1)\times|\actionspace|}$. Labelling the states of the chain $x_1,\ldots,x_{K}$, we parametrise $Q(x_1, a)$ by $w_{1, a}$, $Q(x_K,a)$ by $w_{{K-1},a}$, and $Q(x_k, a)$ by $\frac{1}{2} w_{{k-1},a} + \frac{1}{2} w_{k, a}$, for each $a \in \actionspace$ and each $1 < k < K$; this is illustrated in Figure~\ref{fig:chainFA}. The weight vector is initialised with all coordinates equal to $0$ in all experiments.

\begin{figure}[h]
    \centering
    \includegraphics[keepaspectratio,width=.65\textwidth]{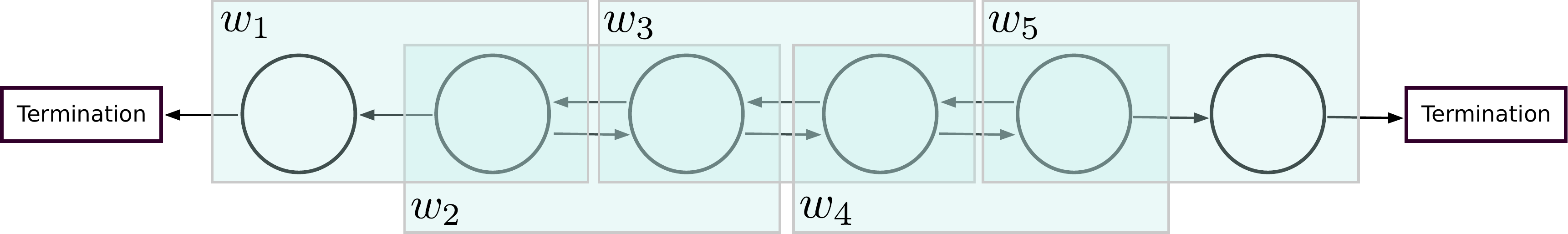}
    \caption{An illustration of the tile-coding scheme used in the linear function approximation scheme; the figure shows how feature weights (for each action) are allocated states. The value prediction at each state is given by averaging the weights allocated to the state.}
    \label{fig:chainFA}
\end{figure}

\subsection{Further experimental results}\label{sec:appendix:additional-results}

In this section, we give in Figure~\ref{fig:policyevaluationappendix} the results described in Section~\ref{sec:policy-evaluation}, including also results for oracle versions of the \CIW algorithms in question. We observe that the performance of the online versions of \CIW algorithms generally closely track that of their oracle counterparts.

\begin{figure}[!h]
    \centering
    \includegraphics[keepaspectratio,width=.9\textwidth]{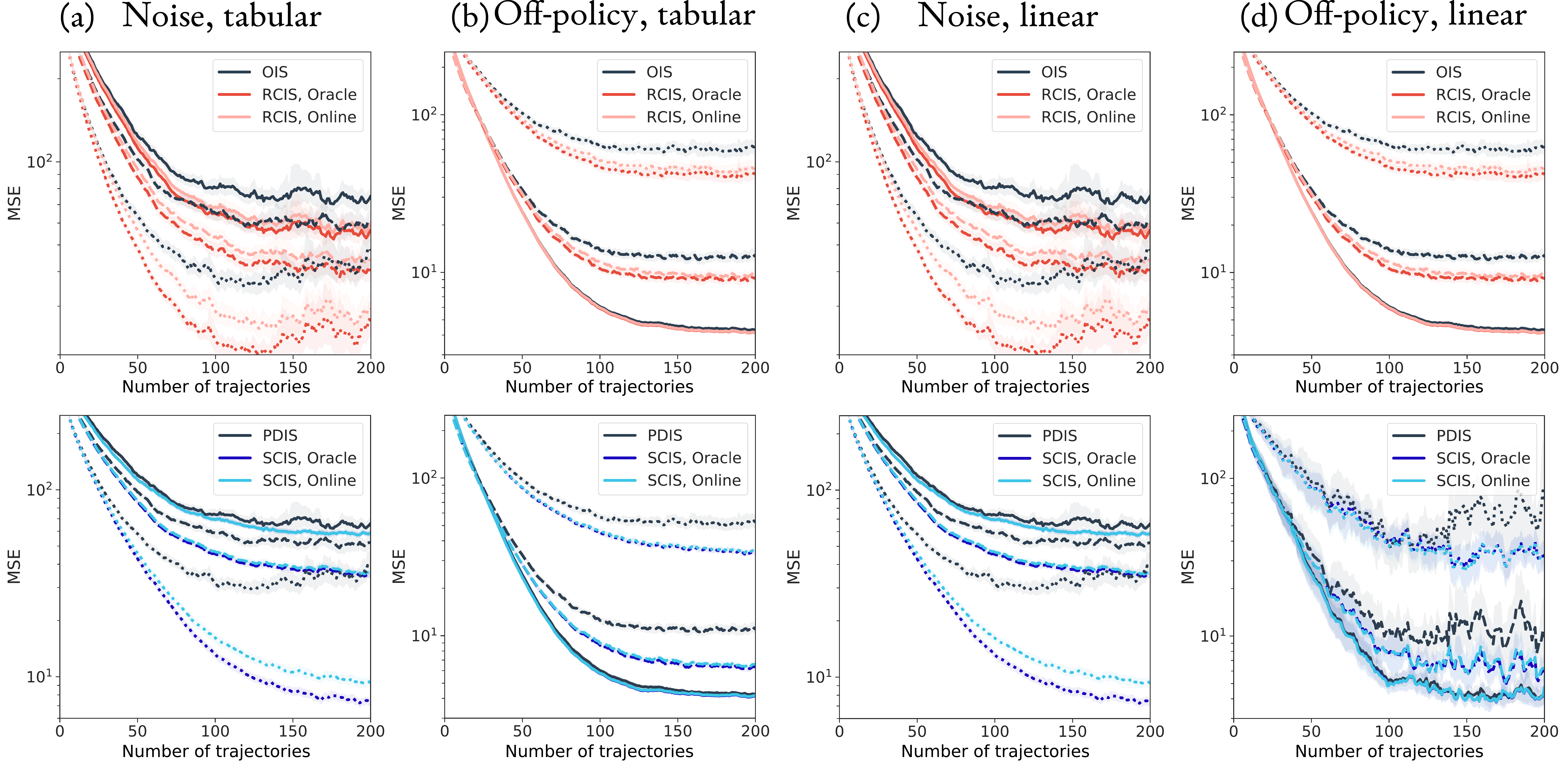}
    \caption{Policy evaluation MSE as a function of number of trajectories for OIS, RCIS, PDIS, and SCIS, with both tabular and function approximation variants. Shaded regions indicate bootstrapped 95\% confidence intervals.}
    \label{fig:policyevaluationappendix}
\end{figure}

\section{Extending the \CIW framework}

\subsection{A measure-theoretic perspective on conditional importance sampling}\label{sec:sigmaalgebras}

In this section, we give a measure-theoretic treatment of the conditional importance sampling framework introduced in Section~\ref{sec:framework} of the main paper. We do not provide any fundamentally new results relative to the main paper, but we believe the measure-theoretic exposition gives a useful perspective, and may be useful for future work.

We begin by returning to the trajectory importance-weighted estimator given in Expression~\eqref{eq:general-ois} in the main paper:
\begin{align*}
    \frac{\partialtrajectory{\pi}{0:n}|_{(x, a)}(\tau_{0:n})}{\partialtrajectory{\mu}{0:n}|_{(x, a)}(\tau_{0:n})} \Psi(\tau_{0:n}) \, .
\end{align*}
This expression weights the target quantity $\Psi(\tau_{0:n})$ by the importance weight associated with the proposal distribution $\partialtrajectory{\mu}{0:n}$ and the target distribution $\partialtrajectory{\pi}{0:n}$. A conditional importance sampling estimator is formed by taking a function $\Phi$ that in the language of the main paper, is a \scflong for $\Psi$, and forming the new estimator
\begin{align*}
    \mathbb{E}\left\lbrack \frac{\partialtrajectory{\pi}{0:n}|_{(x, a)}(\tau_{0:n})}{\partialtrajectory{\mu}{0:n}|_{(x, a)}(\tau_{0:n})} \middle| \Phi(\tau_{0:n}) \right\rbrack\Psi(\tau_{0:n}) \, .
\end{align*}
Proposition~\ref{prop:ConditionalISVar} then shows that the variance of the conditioned estimator is no greater than that of the trajectory-weighted estimator, and, roughly speaking, in many cases it is strictly lower.

Whilst this perspective of conditioning on functionals $\Phi$ of the trajectory is conceptually straightforward and clearly hints at how such techniques can be implemented in practice, as described in Section~\ref{sec:regression}, there are some subtleties introduced by this perspective that make the analysis of the method less straightforward. One such case is illustrated by the following example: consider two \scfslong $\Phi_1$ and $\Phi_2$ for a target $\Psi$, which happen to be related according to the identity $\Phi_1(\tau_{0:n}) = 2\Phi_2(\tau_{0:n})$ for all $\tau_{0:n}$. Intuitively, $\Phi_1$ and $\Phi_2$ encode the same information about $\tau_{0:n}$, and thus the estimators they produce are identical. We might therefore like to be able to treat $\Phi_1$ and $\Phi_2$ as ``identical'' in our analysis, and yet this is made difficult by the focus of the analysis on \emph{functionals} of the trajectory. This is related to the need to work with \emph{preorders} in Section~\ref{sec:orderoptimal}, rather than the perhaps more familiar notion of \emph{partial orders}. One route around this difficulty is to define an equivalence relation over functions of the trajectory, rigorously encoding the notion of ``captures the same information about $\tau_{0:n}$'', and then to work instead with equivalence classes of trajectory functionals under this relation. However, this has the potential to be very unwieldy, and further, it turns out this is essentially equivalent to a much more familiar collection of objects from measure theory, known as sigma-algebras. For formal definitions and background on sigma-algebras, see for example \citet{billingsley}. We note that technically speaking, it is necessary to constrain functionals of the trajectory to be \emph{measurable}; we do not mention this condition further in this section, but return to it in Appendix~\ref{sec:generalising} when describing the application of the conditional importance sampling framework to more general classes of MDPs.
For a general random variable $Z$, we write $\mathscr{F}_Z$ for the sigma-algebra generated by $Z$; in the discussion that follows, all random variables will be defined over the same probability space, which we therefore suppress from the notation in what follows.

The counterpart to a \scflong $\Phi$ is a \emph{\scsalong} (\SCSA) $\mathscr{F}$, which is defined as being a sigma-algebra over the same measurable space as $\mathscr{F}_{\tau_{0:n}}$, with the property that $\mathscr{F}_{\Psi(\tau_{0:n})} \subseteq \mathscr{F}$. With this definition, a functional $\Phi$ is an \SCF if and only if $\mathscr{F}_{\Phi(\tau_{0:n})}$ is an \SCSA. The corresponding importance sampling estimator is then given by
\begin{align*}
    \mathbb{E}_{\trajectory{\mu}|_{(x, a)}}\!\left\lbrack \frac{\partialtrajectory{\pi}{0:n}|_{(x, a)}(\tau_{0:n})}{\partialtrajectory{\mu}{0:n}|_{(x, a)}(\tau_{0:n})} \middle|\mathscr{F} \right\rbrack \Psi(\tau_{0:n}) \, .
\end{align*}
The analogue of the preorder $\precsim$ over conditioning functionals is the \emph{inclusion partial order} $\subseteq$ over sigma-algebras; we have $\Phi_1 \precsim \Phi_2$ if and only if $\mathscr{F}_{\Phi_1(\tau_{0:n})} \subseteq \mathscr{F}_{\Phi_2(\tau_{0:n})}$. 
Further, if for two conditioning functionals $\Phi_1$ and $\Phi_2$ we have $\Phi_1 \precsim \Phi_2$ and $\Phi_2 \precsim \Phi_1$ (that is, roughly speaking, $\Phi_1$ and $\Phi_2$ encode the same information about the trajectory), then we have $\mathscr{F}_{\Phi_1(\tau_{0:n})} = \mathscr{F}_{\Phi_2(\tau_{0:n})}$. Thus, working with sigma-algebras eliminates the issue of several conditioning objects representing exactly the same information about the trajectory.

\subsection{Generalising the conditional importance sampling framework to other classes of MDPs}\label{sec:generalising}

We have restricted the presentation in the main paper to MDPs with finite state and action spaces and reward distributions with finite support
for ease of exposition, and to avoid having to introduce measure-theoretic terminology such as Radon-Nikodym derivatives to deal with more general classes of MDPs. Nevertheless, the framework described in the main paper applies much more generally, such as for certain classes of MDPs with continuous state and/or action spaces. In this section, we briefly describe how the framework generalises to these settings. The aim is not to be exhaustive, but rather to indicate how key concepts change when moving away from the assumptions of the main paper; for a rigorous treatment of the measure-theoretic issues that arise in MDPs with more general state and action spaces, see \citet{bertsekas2007stochastic}.

Consider now an MDP with a general state space $\statespace$ and action space $\actionspace$, each equipped with a sigma-algebra, and consider $\mathbb{R}$, the domain of rewards in the MDP, to be equipped with its usual Borel sigma-algebra. Given measurable transition kernel $P : \statespace\times\actionspace \rightarrow \mathscr{P}(\statespace)$, reward kernel $\mathcal{R} : \statespace\times\actionspace \rightarrow \mathscr{P}(\mathbb{R})$, initial state distribution $\nu \in \mathscr{P}(\statespace)$, and two Markov policies $\pi, \mu : \statespace \rightarrow \mathscr{P}(\actionspace)$, we can straightforwardly define trajectory measures $\partialtrajectory{\mu}{0:n}$, $\partialtrajectory{\pi}{0:n}$, and conditional trajectory measures $\partialtrajectory{\mu}{0:n}|_{(x, a)}$, $\partialtrajectory{\pi}{0:n}|_{(x, a)}$ over the relevant product space. 
The key requirement in order to be able to carry out importance sampling in this more general case is that $\partialtrajectory{\pi}{0:n}|_{(x, a)}$ is absolutely continuous with respect to $\partialtrajectory{\mu}{0:n}|_{(x, a)}$. When this is the case, the Radon-Nikodym derivative
\begin{align*}
    \frac{\mathrm{d}\partialtrajectory{\pi}{0:n}|_{(x, a)}}{\mathrm{d}\partialtrajectory{\mu}{0:n}|_{(x, a)}}(\tau_{0:n})
\end{align*}
exists, and has the property that for a measurable functional $\Psi$ of the trajectory, under mild integrability conditions, we have
\begin{align*}
    \mathbb{E}_{\partialtrajectory{\mu}{0:n}|_{(x, a)}}\!\left\lbrack \frac{\mathrm{d}\partialtrajectory{\pi}{0:n}|_{(x, a)}}{\mathrm{d}\partialtrajectory{\mu}{0:n}|_{(x, a)}}(\tau_{0:n}) \Psi(\tau_{0:n}) \right\rbrack
     = 
    \mathbb{E}_{\partialtrajectory{\pi}{0:n}|_{(x, a)}}\!\left\lbrack \Psi(\tau_{0:n}) \right\rbrack \, ,
\end{align*}
the fundamental property we require an importance weight to satisfy. The \CIW framework of the main paper can thus be extended to these more general settings by computing conditional expectations of the Radon-Nikodym derivative of the two trajectory measures. We conclude by noting that in several practical applications of interest, $\statespace$ and $\actionspace$ are themselves subsets of Euclidean spaces, with $\pi(\cdot|x)$ and $\mu(\cdot|x)$ taken to have densities with respect to Lebesgue measure for each $x \in \statespace$; in such circumstances, under mild assumptions, the Radon-Nikodym derivative can be expressed in the familiar form of a product of action density ratios; that is
\begin{align*}
    \frac{\mathrm{d}\partialtrajectory{\pi}{0:n}|_{(x, a)}}{\mathrm{d}\partialtrajectory{\mu}{0:n}|_{(x, a)}}(\tau_{0:n}) = \prod_{t=1}^{n-1} \frac{\pi(A_t|X_t)}{\mu(A_t|X_t)} \, .
\end{align*}
However, in cases where the action distribution $\pi(\cdot|x)$ is \emph{not} absolutely continuous with respect to $\mu(\cdot|x)$, such as in deterministic policy gradient algorithms \citep{silver2014deterministic,lillicrap2015continuous}, the measure $\partialtrajectory{\pi}{0:n}$ is \emph{not} absolutely continuous with respect to $\partialtrajectory{\mu}{0:n}$, meaning that the Radon-Nikodym derivative does not exist, and so importance sampling, and in particular the \CIW framework, cannot straightforwardly be applied.

\end{document}